\def\ceil#1{\lceil #1 \rceil}
\newtheorem{theorem}{Theorem}
\newtheorem{proposition}{Proposition}
\newcommand{\R}{\mathbb{R}}
\DeclareMathOperator*{\circsum}{\bigcirc}
\DeclareMathOperator*{\argmax}{arg\,max}
\begin{document}

\title{Abelian Neural Networks}

\makeatletter
\newcommand{\printfnsymbol}[1]{%
  \textsuperscript{\@fnsymbol{#1}}%
}
\makeatother

\author{
    Kenshin Abe \textsuperscript{\rm{1}},
    Takanori Maehara \textsuperscript{\rm{2}},
    Issei Sato \textsuperscript{\rm{1, 2}} \\
    \textsuperscript{\rm{1}} The University of Tokyo, \textsuperscript{\rm{2}} RIKEN \\
    \texttt{abe.kenshin@gmail.com}
}

\date{}
\maketitle

\begin{abstract}
We study the problem of modeling a binary operation that satisfies some algebraic requirements.
We first construct a neural network architecture for Abelian group operations and derive a universal approximation property.
Then, we extend it to Abelian semigroup operations using the characterization of associative symmetric polynomials.
Both models take advantage of the analytic invertibility of invertible neural networks.
For each case, by repeating the binary operations, we can represent a function for multiset input thanks to the algebraic structure.
Naturally, our multiset architecture has size-generalization ability, which has not been obtained in existing methods.
Further, we present modeling the Abelian group operation itself is useful in a word analogy task.
We train our models over fixed word embeddings and demonstrate improved performance over the original word2vec and another naive learning method.
\end{abstract}

\section{Introduction}
Thanks to the universal approximation theorem \parencite{Cybenko1989ApproximationBS, Leshno1993MultilayerFN, Pinkus1999ApproximationTO, Lu2017TheEP}, feedforward neural networks can approximate any continuous functions.
However, since available data in the real world is limited, the use of a suitable network architecture that reflects inductive biases behind each problem usually leads to better empirical performance on unseen data.
For example, convolutional neural networks \parencite{LeCun1989BackpropagationAT}, which reflect shift invariance of input images, have become the first choice for image recognition;
Invertible neural networks \parencite{Papamakarios2019NormalizingFF}, which are designed to be bijective and be able to compute the inverse, have often been used to model complex probability distributions.
When constructing a new architecture, since we often limit the operations of the networks, its expressive power for a target function class matters.
This has been studied for each case, such as the universality of the convolutional neural networks for continuous functions \parencite{Zhou2018UniversalityOD} and the invertible neural networks for smooth invertible functions \parencite{Teshima2020CouplingbasedIN}.

Recently, there has been increasing attention to permutation invariance as an inductive bias.
Graph neural networks \parencite{Kipf2017SemiSupervisedCW, Gilmer2017NeuralMP} and neural networks for (multi)sets \parencite{Zaheer2017DeepS, Qi2017PointNetDL} reflect this and succeeded in many fields such as chemical molecules, combinatorial optimization, and 3D point clouds.
A byproduct of these models is that they can handle inputs of different sizes, e.g., graphs or (multi)sets of different sizes.
In real-world applications, since annotating label for or training on inputs of large size is computationally expensive, we sometimes try to train a model on smaller data in size and make it generalize to larger test data.
Empirically, some studies have reported the size-generalization ability of neural networks for certain tasks \parencite{Khalil2017LearningCO, Abe2019SolvingNP}.
On the other hand, it has been shown that graph neural networks do not naturally generalize to larger graphs than the training graphs \parencite{Yehudai2020OnSG}.

This work presents the multiset learning setting where we can naturally induce the size generalization.
We consider a function over multisets that can be expressed as the composition of binary operations.
In order for the function to be well-defined, the binary operation needs to form an Abelian semigroup.
For example, $\max(\cdot, \cdot)$, $\min(\cdot, \cdot)$, and $+$ are semigroup operations and they indeed compose well-defined multiset functions: the maximum, minimum, and summation.
To model such binary operations, we propose two novel neural network architectures: Abelian group network and Abelian semigroup network that meet the condition of the Abelian group and semigroup, respectively.
We show that the Abelian group network is a universal approximator of smooth Abelian group operations.
By repeating the binary operations, we can construct two multiset architectures that have the size-generalization ability.
Another useful property of the Abelian group network is that it can explicitly compute the inverse element in the Abelian group.
Therefore, it is also suitable for learning a better function that is otherwise heuristically modeled by simple operations such as $+$ and $-$.

\section{Preliminaries and Related Work} \label{chap:prelim}
\subsection{Definitions}
In this section, let us introduce some basic notations and important definitions that will play a key role in this work.
\subsubsection{Notations}
By $\mathbb{N}$, we represent the set of the natural numbers including $0$.
We denote a vector by a bold symbol, e.g., $\bm{x}$.
Let $\bm{x} \in \mathbb{R}^d$ be a $d$-dimensional vector.
We represent the $i$-th element ($1 \leq i \leq d$) of $\bm{x}$ by $x_i$.
For $1 \leq k \leq d$, $\bm{x}_{\leq k} \in \mathbb{R}^k$ is a $k$-dimensional vector $(x_1, \ldots x_k)^T$ and $\bm{x}_{< k} \in \mathbb{R}^{k-1}$ is a $(k - 1)$-dimensional vector $(x_1, \ldots x_{k - 1})^T$.
We denote the elementwise product of two vectors $x, y \in \mathbb{R}^d$ by $x \otimes y$, such that $(x \otimes y)_{i} = x_{i} y_{i}$.
We denote the elementwise division of two vectors $x \in \mathbb{R}^d, y \in (\mathbb{R} \setminus \{0\})^d$ by $x \oslash y$, such that $(x \oslash y)_{i} = x_{i} / y_{i}$.
Unless otherwise noted, $\| \cdot \|$ represents the $L^2$ (Euclidean) norm.

Let $\mathcal{X}$ be the domain of each element.
We denote the set of multisets over $\mathcal{X}$ by $\mathbb{N}^\mathcal{X}$.
We use $\{\bm{x}_1, \ldots \bm{x}_n \} \in \mathbb{N}^\mathcal{X}$ to describe a multiset composed of $\bm{x}_1, \ldots, \bm{x}_n \in \mathcal{X}$ (any confusion with sets is not problematic in this paper).
Addition over multisets is defined as follows: $\{\bm{x}_1, \ldots, \bm{x}_n\} + \{\bm{x}_{n+1}, \ldots, \bm{x}_N\} = \{\bm{x}_1, \ldots, \bm{x}_n, \bm{x}_{n+1}, \ldots, \bm{x}_N \}$.
The cardinality of a multiset is the number of elements with multiplicity and is expressed by $| \cdot |$ e.g., $|\{1, 2, 2, 3\}| = 4$.
A symmetric group $S_n$ is the set of all $n!$ permutations that can be performed on $n$ elements.

\subsubsection{Universality} \label{sec:prelim-universality}
Universality is an important theoretical property of neural networks' expressive power.
Let $\mathcal{M}$ be a model and $\mathcal{F}$ be a class of target functions, both of which are sets of functions $\mathcal{X} \rightarrow \mathcal{Y}$.
The model $\mathcal{M}$ is a sup-universal approximator of $\mathcal{F}$ if for any target function $f^* \in \mathcal{F}$, for any $\epsilon > 0$, and for any compact subset $\mathcal{K} \subset \mathcal{X}$, there exists a function $f \in \mathcal{M}$ such that
\begin{equation}
    \sup_{\bm{x} \in \mathcal{K}} \| f(\bm{x}) - f^*(\bm{x}) \| < \epsilon .
\end{equation}
If not noted otherwise, universality refers to the sup-universal property.

For $p \in [1, \infty)$, $\mathcal{M}$ is an $L^p$-universal approximator of $\mathcal{F}$ if for any target function $f^* \in \mathcal{F}$, for any $\epsilon > 0$, and for any compact subset $\mathcal{K} \subset \mathcal{X}$, there exists a function $f \in \mathcal{M}$ such that
\begin{equation}
    \int_\mathcal{K} \| f(\bm{x}) - f^*(\bm{x}) \|^p d\bm{x} < \epsilon.
\end{equation}
If $\mathcal{M}$ is sup-universal for $\mathcal{F}$, $\mathcal{M}$ is $L^p$-universal for $\mathcal{F}$.
Therefore, $L^p$-universality is a weaker condition of sup-universality.

\subsubsection{Basic Algebra} \label{sec:agroup-definition}
Here, we introduce the basic definition of important algebraic structures in this study.
Let $G$ be a set and $\circ: G \times G \rightarrow G$ be a binary operation.
Below, we review four properties to define Abelian semigroups and groups.
\begin{description}
\item[Associativity]
For any $x, y, z \in G$, $(x \circ y) \circ z = x \circ (y \circ z)$.
\item[Identity Element]
There exists an element $e \in G$, called the identity element, such that for any $x \in G$, $x \circ e = e \circ x = x$.
\item[Inverse Element]
For any $x \in G$, there exists an element $x^{-1} \in G$, called the inverse element of $x$, such that $x \circ x^{-1} = x^{-1} \circ x = e$.
\item[Commutativity]
For any $x, y \in G$, $x \circ y = y \circ x$.
\end{description}

Table \ref{tbl:algebra-def} shows which properties are required in each algebraic structure.
A semigroup only requires associativity to the binary operation.
A group is a semigroup with an identity element and inverse elements.
An Abelian (semi)group is a (semi)group with commutativity.
\begin{table*}[ht]
    \centering
    \caption{Properties required for each algebraic structure.}
    \begin{tabular}{c|cccc}
        & Associativity & Identity & Inverse & Commutativity
        \\
        \hline
        Semigroup & \checkmark & - & - & - \\
        Group & \checkmark & \checkmark & \checkmark & - \\
        Abelian Semigroup & \checkmark & - & - & \checkmark \\
        Abelian Group & \checkmark & \checkmark & \checkmark & \checkmark
    \end{tabular}
    \label{tbl:algebra-def}
\end{table*}

\subsection{Invertible Neural Networks} \label{sec:inn}
Invertible neural networks are neural networks that approximate invertible functions $\mathbb{R}^d \rightarrow \mathbb{R}^d$.
Here, we review some existing studies for multi-dimensional case, i.e., $d \geq 2$, and single-dimensional case, i.e., $d = 1$.

\subsubsection{Normalizing Flows}
Multi-dimensional invertible neural networks have been studied mainly in the context of normalizing flows \parencite{Tabak2010DENSITYEB}, which iteratively apply invertible functions to a simple original probability distribution to express complex probability distributions \parencite{Kobyzev2020NormalizingFA, Papamakarios2019NormalizingFF}.
There have been many variants proposed including residual flows \parencite{Behrmann2019InvertibleRN}, neural ODEs \parencite{Chen2018NeuralOD}, and autoregressive flows \parencite{Kingma2017ImprovedVI}.
Here we review affine coupling flows \parencite{Dinh2015NICENI}, one of the most popular models with parallelizable efficient inverse computation.
Each layer of the affine coupling flows maps $\bm{x} = (x_1, \ldots, x_d) \in \R^d$ to $\bm{y} = (y_1, \ldots, y_d) \in \R^d$ such that
\begin{equation}
    \begin{cases}
        \bm{y}_{\leq k} &= \bm{x}_{\leq k},  \\
        \bm{y}_{> k} &= \bm{x}_{> k} \otimes \exp(\alpha(\bm{x}_{\leq k})) + \beta(\bm{x}_{\leq k}),
    \end{cases}
\end{equation}
where $\exp$ is applied elemntwise and $\alpha, \beta: \mathbb{R}^k \rightarrow \mathbb{R}^{d-k}$ are trainable functions.
The inverse is computed as follows:
\begin{equation}
    \begin{cases}
        \bm{x}_{\leq k} &= \bm{y}_{\leq k},  \\
        \bm{x}_{> k} &= (\bm{y}_{> k} - \beta(\bm{y}_{\leq k})) \otimes \exp(-\alpha(\bm{y}_{\leq k}))
    \end{cases}
\end{equation}
They are used in many successful applications such as NICE \parencite{Dinh2015NICENI}, Real NVP \parencite{Dinh2017DensityEU}, and Glow \parencite{Kingma2018GlowGF}.

Although the normalizing flows have a limited form of transform, they still admit universalities on certain classes of functions \parencite{Teshima2020CouplingbasedIN}.
The affine coupling flows are $L^p$-universal for $C^2$-diffeomorphism.
Some more complex models including deep sigmoidal flows \parencite{Huang2018NeuralAF} and sum-of-squares polynomial flows \parencite{pmlr-v97-jaini19a} are sup-universal for $C^2$-diffeomorphism.

\subsubsection{Single-dimensional Invertible Neural Networks}
For single-dimensional functions, invertibility is equivalent to strict monotonicity.
Monotonic networks \parencite{Sill1997MonotonicN} model strictly monotonic functions.
Let $K$ be a number of groups and $J_k$ be a number of units for the $k$-th group.
A single-dimensional monotonic network $f:\R \rightarrow \R$ is described as follows with parameters $w^{(k, j)}, b^{(k, j)} \in \R$:
\begin{equation} \label{eq:monotonic-net}
    f(x) = \min_{1 \leq k \leq K} \max_{1 \leq j \leq J_k} w^{(k, j)} \cdot x + b^{(k, j)},
\end{equation}
where all the weights $w^{(k, j)}$ are constrained to be positive for increasing monotonicity and negative for decreasing monotonicity.
For example, the following form is used in practice for the increasingly monotonic case:
\begin{equation}
    f(x) = \min_{1 \leq k \leq K} \max_{1 \leq j \leq J_k} \exp(\tilde{w}^{(k, j)}) \cdot x + b^{(k, j)},
\end{equation}
where $\tilde{w}^{(k, j)} \in \R$.
The monotonic networks are a universal approximator for strictly monotonic functions.
Monotonic rational-quadratic transforms \parencite{Durkan2019NeuralSF} are another universal model for the single-dimensional case.

\subsection{Related Work}
Here, we explain related work.
\subsubsection{Algebraic Structures in Neural Networks}
In the literature of deep learning, algebraic structures mainly appear in the context of group invariant/equivariant neural networks.
For image input, some studies tried to incorporate reflection and rotation invariance into convolutional neural networks \parencite{pmlr-v48-cohenc16, Worrall_2017_CVPR}.
Neural networks for (multi)sets \parencite{Zaheer2017DeepS, Qi2017PointNetDL} adopted invariance/equivariance to symmetric group actions.
Recent studies have investigated symmetries invariant/equivariant to more general group actions, such as a subgroup of the symmetric group \parencite{pmlr-v97-maron19a} and sets of symmetric elements \parencite{pmlr-v119-maron20a}.

On the other hand, our work tries to model an Abelian group/semigroup operation itself.

\subsubsection{Inductive Bias and Expressive Power of Neural Networks}
Inductive biases are assumptions on the nature of the data-generating process or the space of solutions in machine learning \parencite{Battaglia2018RelationalIB}.
Many studies have constructed special neural networks that reflect the inductive biases of a given problem setting.
At the same time, since those networks are often composed of limited forms of neural operations, expressive power including universal approximation properties have been studied.

Convolutional layers of convolutional neural networks (CNN) \parencite{LeCun1989BackpropagationAT} are designed to be invariant to the small shift of an input image.
CNN without fully connected layers has been shown to be universal \parencite{Zhou2018UniversalityOD}.
Forcing the network functions to be bijective (=invertible) is also an inductive bias, which we summarized in Section \ref{sec:inn}.
Message passing graph neural networks \parencite{Gilmer2017NeuralMP} such as graph convolutional networks \parencite{Kipf2017SemiSupervisedCW} and graph attention networks \parencite{Vaswani2017AttentionIA} are designed under the assumption that neighboring nodes have similar properties.
They have been shown to have limited expressive power in terms of graph isomorphism \parencite{Xu2019HowPA, Morris2019WeisfeilerAL} and more expressive models have been studied \parencite{Sato2019ApproximationRO, Maron2019OnTU, Keriven2019UniversalIA, Maehara2019ASP}.
For a (multi)set learning problem, DeepSets \parencite{Zaheer2017DeepS} are one of the most popular models with universal approximation property.

\subsubsection{Size Generalization}
Graph neural networks and neural networks for (multi)sets can handle graphs of different sizes, and their size-generalization ability has been empirically shown in some applications such as physical systems \parencite{Battaglia2016} and combinatorial optimization \parencite{Khalil2017LearningCO, Abe2019SolvingNP, Velickovic2020Neural}.
However, from a theoretical perspective, there exist simple tasks on which graph neural networks do not naturally generalize to larger graphs \parencite{Yehudai2020OnSG}.
Recent work has analyzed the extrapolation of graph neural networks trained by gradient descent \parencite{xu2021how}.
There have been few studies on size generalization of (multi)sets probably because of difficulty in analyzing DeepSets for inputs of different sizes.

\section{Proposed Methods} \label{chap:proposed}
Here, we introduce the proposed methods.
First, we describe the motivation for modeling Abelian group and semigroup operations from the perspective of multiset learning.
Next, we propose a model for Abelian group operations and show its universality.
Then, we extend it for the Abelian semigroup by using the characterization of the associative symmetric polynomials.
Finally, we present architectures for multiset input and show the size-generalization ability of the model for Abelian groups.

\subsection{Motivation on Multiset Functions}
Let $\mathcal{X}$ and $\mathcal{Y}$ be Euclidian spaces, i.e., $\mathcal{X} = \mathbb{R}^{d_1}$ and $\mathcal{Y} = \mathbb{R}^{d_2}$.
A function $f: \mathcal{X}^n \rightarrow \mathcal{Y}$ is called permutation invariant if for any $\bm{X} \in \mathcal{X}^n$ and for any permutation $\sigma \in S_n$, $f(\sigma \cdot \bm{X}) = f(\bm{X})$ holds.
This concept can be extended to functions that take vectors of different dimensions.
Namely, a function $f: \bigcup_{k \in \mathbb{N}} \mathcal{X}^k \rightarrow \mathcal{Y}$ is called permutation invariant if for any $k \in \mathbb{N}$, for any $\bm{X} \in \mathcal{X}^k$ and for any permutation $\sigma \in S_k$, $f(\sigma \cdot \bm{X}) = f(\bm{X})$ holds.
When $f: \bigcup_{k \in \mathbb{N}} \mathcal{X}^k \rightarrow \mathcal{Y}$ is permutation invariant, it can be also viewed as a function that takes multisets as input.
For notation simplicity, we use the same variable to express the multiset function: $f: \mathbb{N}^{\mathcal{X}} \rightarrow \mathcal{Y}$.

In this work, we propose to learn a function over multisets that can be represented as the composition of binary operations.
For this function class, size generalization is naturally guaranteed, as we will show in Theorem \ref{thm:size-generalization}.
Below, we present a necessary and sufficient condition for multiset functions that are represented by the composition of binary operations to be well-defined.
\begin{proposition}[Permutation Invariant Conditions for Binary Operation]
Let $f: \bigcup_{k \in \mathbb{N}} \mathcal{X}^k \rightarrow \mathcal{X}$ be a function represented as
\begin{equation}
    f(\bm{X}) = \bm{x}_1 \circ \cdots \circ \bm{x}_n,
\end{equation}
where $\bm{X} = (\bm{x}_1, \ldots, \bm{x}_n) \in \mathcal{X}^n$ and $\circ: \mathcal{X} \times \mathcal{X} \rightarrow \mathcal{X}$ is a binary operation (left-associative).
The function $f$ is invariant if and only if $\circ$ forms an Abelian semigroup, namely, $\circ$ is commutative and associative.
\end{proposition}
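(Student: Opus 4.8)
The plan is to prove the two implications separately, with the observation that the real combinatorial content sits in the ``if'' direction, while the ``only if'' direction needs only the cases $n = 2$ and $n = 3$ (the degenerate cases $n \in \{0,1\}$ carry no content and the statement is understood for $n \ge 1$).

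For the ``if'' direction, assume $\circ$ is associative and commutative. I would first invoke the generalized associativity law: by induction on $n$, associativity of $\circ$ implies that every parenthesization of $\bm{x}_1 \circ \cdots \circ \bm{x}_n$ yields the \emph{same} element of $\mathcal{X}$, so in particular the left-associative product is unambiguous; note that closure of $\circ$ keeps every partial product inside $\mathcal{X}$, so these regroupings are literally valid. Next, to show invariance under an arbitrary $\sigma \in S_n$, I would factor $\sigma$ into adjacent transpositions $\tau_i = (i\ \ i{+}1)$; since these generate $S_n$, it suffices (by induction on the number of factors) to prove invariance under a single $\tau_i$. For one $\tau_i$, regroup the product as $(\bm{x}_1 \circ \cdots \circ \bm{x}_{i-1}) \circ (\bm{x}_i \circ \bm{x}_{i+1}) \circ (\bm{x}_{i+2} \circ \cdots \circ \bm{x}_n)$ using generalized associativity, apply commutativity to the middle factor, $\bm{x}_i \circ \bm{x}_{i+1} = \bm{x}_{i+1} \circ \bm{x}_i$, and regroup back; this is exactly the effect of $\tau_i$ on $\bm{X}$, so $f(\tau_i \cdot \bm{X}) = f(\bm{X})$, and the induction finishes the direction.

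For the ``only if'' direction, assume $f$ is permutation invariant on all sizes. Taking $n = 2$ and $\sigma = (1\ 2)$ gives $\bm{x}_1 \circ \bm{x}_2 = f(\bm{x}_1, \bm{x}_2) = f(\bm{x}_2, \bm{x}_1) = \bm{x}_2 \circ \bm{x}_1$, i.e.\ commutativity. For associativity, take $n = 3$; left-associativity means $f(\bm{x}, \bm{y}, \bm{z}) = (\bm{x} \circ \bm{y}) \circ \bm{z}$, and applying invariance to the permutation sending $(\bm{x}, \bm{y}, \bm{z})$ to $(\bm{z}, \bm{y}, \bm{x})$ yields $(\bm{x} \circ \bm{y}) \circ \bm{z} = (\bm{z} \circ \bm{y}) \circ \bm{x}$. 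Now apply the commutativity just established twice: $\bm{z} \circ \bm{y} = \bm{y} \circ \bm{z}$, and then, viewing $\bm{y} \circ \bm{z} \in \mathcal{X}$ as a single element, $(\bm{y} \circ \bm{z}) \circ \bm{x} = \bm{x} \circ (\bm{y} \circ \bm{z})$. Chaining these equalities gives $(\bm{x} \circ \bm{y}) \circ \bm{z} = \bm{x} \circ (\bm{y} \circ \bm{z})$ for all $\bm{x}, \bm{y}, \bm{z}$, so $\circ$ forms an Abelian semigroup.

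The main obstacle I anticipate is purely bookkeeping in the ``if'' direction: stating generalized associativity precisely enough that ``$\bm{x}_1 \circ \cdots \circ \bm{x}_n$'' and each regrouping denote the same element, and then cleanly nesting the two inductions (one over the factorization of $\sigma$ into adjacent transpositions, one inside the single-transposition step). The ``only if'' direction has essentially no obstacle, since it only exercises $n \le 3$ together with the already-derived commutativity.
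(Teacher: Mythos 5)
Your proposal is correct and follows essentially the same route as the paper: commutativity from the $n=2$ case, and associativity from a single permutation of three elements combined with the already-derived commutativity (you use the reversal $(\bm{x},\bm{y},\bm{z})\mapsto(\bm{z},\bm{y},\bm{x})$ where the paper uses the cyclic shift, a cosmetic difference). Your ``if'' direction simply spells out, via generalized associativity and adjacent transpositions, what the paper dismisses as obvious.
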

\begin{proof}
It is obvious that when $\circ$ is commutative and associative, $f$ is permutation invariant.
Let us consider the case when $f$ is permutation invariant.
$f((\bm{x}_1, \bm{x}_2)) = f((\bm{x}_2, \bm{x}_1))$ leads to $\bm{x}_1 \circ \bm{x}_2 = \bm{x}_2 \circ \bm{x}_1$ (commutativity).
From $f((\bm{x}_1, \bm{x}_2, \bm{x}_3)) = f((\bm{x}_2, \bm{x}_3, \bm{x}_1))$, we have $(\bm{x}_1 \circ \bm{x}_2) \circ \bm{x}_3 = (\bm{x}_2 \circ \bm{x}_3) \circ \bm{x}_1$ and commutativity leads to $(\bm{x}_1 \circ \bm{x}_2) \circ \bm{x}_3 = \bm{x}_1 \circ (\bm{x}_2 \circ \bm{x}_3)$ (associativity).
\end{proof}

When this condition holds, we represent the multiset version of $f: \mathbb{N}^{\mathcal{X}} \rightarrow \mathcal{X}$ as follows by denoting a composition of $\circ$ by $\circsum$:
\begin{equation} \label{eq:sum-circ}
    f(\bm{X}) = \circsum_{\bm{x} \in \bm{X}} \bm{x},
\end{equation}
where $\bm{X} \in \mathbb{N}^{\mathcal{X}}$ is a multiset of $\mathcal{X}$.
On the basis of this proposition, our goal decomposes into learning Abelian semigroup operations over $\mathcal{X}$.
In Section \ref{sec:agn} and \ref{sec:asn}, we propose neural network architectures for Abelian groups and Abelian semigroups.

\subsection{Abelian Group Network} \label{sec:agn}
We present the Abelian group network that models Abelian group operations as follows:
\begin{equation} \label{eq:agn-binary}
    \bm{x} \circ \bm{y} = \phi^{-1}(\phi(\bm{x}) + \phi(\bm{y})),
\end{equation}
where $\phi: \mathcal{X} \rightarrow \mathcal{X}$ is a trainable invertible function, typically modeled by an invertible neural network.

First, we check that this binary operation satisfies the four conditions of the Abelian group in Section \ref{sec:agroup-definition}.
Associativity and commutativity follow from the following proposition shown in Appendix \ref{sec:proof-semigroup-conserve}.
\begin{proposition}[Semigroup Conservation] \label{prop:associative}
Let $\rho: \mathcal{X} \rightarrow \mathcal{X}$ be a bijective function.
When $\ast: \mathcal{X} \times \mathcal{X} \rightarrow \mathcal{X}$ is associative, $\bm{x} \circ \bm{y} = \rho^{-1}(\rho(\bm{x}) \ast \rho(\bm{y}))$ is also assoviative.
Similarly, when $\ast$ is commutative, $\circ$ is commutative.
\end{proposition}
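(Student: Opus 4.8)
The plan is to transport the equational identities through the bijection $\rho$, i.e.\ to observe that $\rho$ is an isomorphism from $(\mathcal{X}, \ast)$ onto $(\mathcal{X}, \circ)$. The only fact used repeatedly is the cancellation $\rho(\rho^{-1}(\bm{u})) = \bm{u}$ for all $\bm{u} \in \mathcal{X}$, which holds because $\rho$ is bijective. Commutativity is then immediate: if $\ast$ is commutative,
\begin{equation}
\bm{x} \circ \bm{y} = \rho^{-1}(\rho(\bm{x}) \ast \rho(\bm{y})) = \rho^{-1}(\rho(\bm{y}) \ast \rho(\bm{x})) = \bm{y} \circ \bm{x}.
\end{equation}

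For associativity I would expand both groupings and collapse the inner $\rho \circ \rho^{-1}$. On one side,
\begin{align}
(\bm{x} \circ \bm{y}) \circ \bm{z}
&= \rho^{-1}\big(\rho(\bm{x} \circ \bm{y}) \ast \rho(\bm{z})\big) \\
&= \rho^{-1}\big((\rho(\bm{x}) \ast \rho(\bm{y})) \ast \rho(\bm{z})\big),
\end{align}
where the second equality uses $\rho(\rho^{-1}(\rho(\bm{x}) \ast \rho(\bm{y}))) = \rho(\bm{x}) \ast \rho(\bm{y})$. Symmetrically, $\bm{x} \circ (\bm{y} \circ \bm{z}) = \rho^{-1}\big(\rho(\bm{x}) \ast (\rho(\bm{y}) \ast \rho(\bm{z}))\big)$. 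Now apply associativity of $\ast$ to the argument of $\rho^{-1}$ in the first expression to match the second; since $\rho^{-1}$ is a (single-valued) function, equal arguments give equal values, hence $(\bm{x} \circ \bm{y}) \circ \bm{z} = \bm{x} \circ (\bm{y} \circ \bm{z})$.

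There is essentially no obstacle: the argument is a direct calculation. The one point worth stating with care is that $\rho$ must be a genuine bijection, so that $\rho^{-1}$ is a well-defined two-sided inverse and the cancellations above are licensed on all of $\mathcal{X}$; with that in place the result is purely formal. In fact the same transport argument applies verbatim to any equational law, so $\rho$ also carries an identity element and inverses of $\ast$ to those of $\circ$ — which is what makes the construction in Section~\ref{sec:agn} an Abelian \emph{group} operation, not merely an Abelian semigroup one.
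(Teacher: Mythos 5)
Your proof is correct and follows essentially the same route as the paper's: a direct calculation that collapses the inner $\rho\circ\rho^{-1}$ and then invokes associativity (resp.\ commutativity) of $\ast$ inside the argument of $\rho^{-1}$. The framing as transport of equational laws through the bijection, and the remark about identity and inverses, are accurate but add nothing beyond the paper's argument.
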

By this proposition, since $+$ is associative and commutative, the Abelian group network is also associative and commutative.
The identity element is
\begin{equation}
    \bm{e} = \phi^{-1}(\bm{0}),
\end{equation}
which satisfies 
\begin{align}
\begin{split}
    \bm{x} \circ \bm{e} &= \bm{x} \circ (\phi^{-1}(\bm{0}))
    \\
    &= \phi^{-1}(\phi(\bm{x}) + \bm{0}) = \bm{x}.
\end{split}
\end{align}
The inverse element of $\bm{x} \in \mathcal{X}$ is
\begin{equation} \label{eq:abelian-group-inv}
    \bm{x}^{-1} = \phi^{-1}(-\phi(\bm{x})),
\end{equation}
which satisfies
\begin{align}
\begin{split}
    \bm{x} \circ \bm{x}^{-1} &= \bm{x} \circ (\phi^{-1}(-\phi(\bm{x})))
    \\
    &= \phi^{-1}(\phi(\bm{x}) - \phi(\bm{x}))
    \\
    &= \phi^{-1}(\bm{0}) = \bm{e}.
\end{split}
\end{align}
It is worth noting that we can analytically compute the inverse function (Equation \ref{eq:abelian-group-inv}).
The experiment in Section \ref{sec:word-analogy} takes advantage of this quality of the Abelian group network.

Next, we present the universality of the Abelian group network.
\begin{theorem}[Universality of Abelian group networks] \label{thm:universality-agn}
Let $\mathcal{X}$ be a Euclidean space.
Abelian group networks are a universal approximator of Abelian Lie group operations over $\mathcal{X}$.
In other words, for any Abelian Lie group operation $\circ: \mathcal{X} \times \mathcal{X} \rightarrow \mathcal{X}$, for any $\epsilon > 0$, and for any compact subset $\mathcal{K} \subset \mathcal{X}$, there exists a binary operation function $\ast: \mathcal{X} \times \mathcal{X} \rightarrow \mathcal{X}$ represented by an Abelian group network such that
\begin{equation}
    \sup_{\bm{x} \in \mathcal{K}, \bm{y} \in \mathcal{K}} \|(\bm{x} \circ \bm{y}) - (\bm{x} \ast \bm{y})\| < \epsilon.
\end{equation}
\end{theorem}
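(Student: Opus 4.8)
The plan is to reduce the statement to the classical structure theory of connected abelian Lie groups together with the known universality of invertible neural networks. The key observation is that the defining equation $\bm{x}\circ\bm{y}=\phi^{-1}(\phi(\bm{x})+\phi(\bm{y}))$ says precisely that $\phi$ is an isomorphism from the group $(\mathcal{X},\circ)$ onto $(\mathcal{X},+)$. So the first step is to argue that \emph{every} abelian Lie group operation on $\mathcal{X}\cong\mathbb{R}^d$ is, in fact, isomorphic as a Lie group to $(\mathbb{R}^d,+)$ via some smooth diffeomorphism $\psi:\mathcal{X}\to\mathbb{R}^d$. Since $\mathcal{X}$ is a Euclidean space it is connected and simply connected, so the Lie group $(\mathcal{X},\circ)$ is a connected, simply connected abelian Lie group of dimension $d$; by the standard classification (the exponential map is a covering homomorphism, and simple connectivity forces it to be an isomorphism) it is isomorphic to $(\mathbb{R}^d,+)$, and the isomorphism $\psi$ can be taken to be a $C^\infty$-diffeomorphism. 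This gives the exact representation $\bm{x}\circ\bm{y}=\psi^{-1}(\psi(\bm{x})+\psi(\bm{y}))$.

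Next I would reduce to a compact-set approximation problem for $\psi$ itself. Fix $\epsilon>0$ and a compact $\mathcal{K}\subset\mathcal{X}$. We want an invertible network $\phi$ (of the type admitted as $\phi$ in the Abelian group network) such that $\phi^{-1}(\phi(\bm{x})+\phi(\bm{y}))$ is uniformly within $\epsilon$ of $\psi^{-1}(\psi(\bm{x})+\psi(\bm{y}))$ for $\bm{x},\bm{y}\in\mathcal{K}$. The natural route is: enlarge $\mathcal{K}$ to a compact set $\mathcal{K}'$ large enough that $\psi(\mathcal{K})+\psi(\mathcal{K})\subset\psi(\mathcal{K}'\!)$ (possible since $\psi$ is a homeomorphism and the relevant image set is compact), invoke the universality of invertible neural networks for $C^2$-diffeomorphisms (cited in Section~\ref{sec:inn}) to get $\phi$ approximating $\psi$ uniformly on $\mathcal{K}'$ \emph{together with} control on $\phi^{-1}$ near $\psi(\mathcal{K}')$, and then chase the estimate through the three composed maps using uniform continuity of $\psi^{-1}$ on a neighborhood of $\psi(\mathcal{K})+\psi(\mathcal{K})$. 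A triangle-inequality argument then yields the bound: $\|\phi^{-1}(\phi(\bm{x})+\phi(\bm{y}))-\psi^{-1}(\psi(\bm{x})+\psi(\bm{y}))\|$ is controlled by (i) how well $\phi$ approximates $\psi$ on $\mathcal{K}$, which moves the argument of $\phi^{-1}$ slightly, and (ii) how well $\phi^{-1}$ approximates $\psi^{-1}$ on the relevant compact set.

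The main obstacle is the control of the \emph{inverse} map $\phi^{-1}$. Uniform closeness of $\phi$ to $\psi$ on a compact set does not by itself give uniform closeness of $\phi^{-1}$ to $\psi^{-1}$ unless one has some equicontinuity or a lower bound on the derivative of $\phi$; one must either extract this from the particular universality theorem being cited (checking whether it delivers joint approximation of the map and its inverse, which the sup-universal flow results for $C^2$-diffeomorphisms essentially do on compacta), or argue it by hand using that $\psi$ is a diffeomorphism with $\psi^{-1}$ uniformly continuous on the compact target set and that $\phi$, being close to $\psi$ in a $C^0$ (ideally $C^1$) sense on a slightly larger set, inherits local invertibility with controlled modulus. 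A secondary, more pedestrian point to get right is the bookkeeping of nested compact sets ($\mathcal{K}\subset\mathcal{K}'$ chosen so that all arguments fed into $\phi^{-1}$ and $\psi^{-1}$ stay inside a fixed compact set on which uniform continuity constants are available), so that every $\epsilon/3$ in the triangle inequality is legitimately achievable. Modulo these points, the argument is: classify $(\mathcal{X},\circ)$, write it exactly via $\psi$, approximate $\psi$ by an invertible network $\phi$ on a suitable compact set with inverse control, and push the error through the composition.
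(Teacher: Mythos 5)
Your proposal follows essentially the same route as the paper's proof: use the classification of connected (simply connected) Abelian Lie groups to write $\bm{x}\circ\bm{y}=\pi^{-1}(\pi(\bm{x})+\pi(\bm{y}))$ exactly, enlarge $\mathcal{K}$ to a compact set containing the relevant sums, approximate $\pi$ (and its inverse) by an invertible network via the cited universality result, and push the error through with a Lipschitz bound on $\pi^{-1}$ and the triangle inequality. The inverse-control issue you flag is indeed the delicate point; the paper handles it by simply asserting that the invertible network can be chosen to approximate both $\pi$ on $\mathcal{K}'$ and $\pi^{-1}$ on $\pi(\mathcal{K}')$ simultaneously, so your caveat matches (and is, if anything, more explicit than) what the paper does.
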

Appendix \ref{sec:proof-universality-agn} provides the proof.
It is based on the theory of the Lie group and the universality of invertible neural networks.

\subsection{Abelian Semigroup Network} \label{sec:asn}
Although the Abelian group network proposed in Section \ref{sec:agn} is universal for smooth group operations, it is not sufficient for approximating an Abelian semigroup operation such as the product over $\R$, i.e., $x \circ y = xy$.
Now we extend the Abelian group network and propose the Abelian semigroup network.
Our idea is to extend $+$ of Equation \ref{eq:agn-binary} to a polynomial.
From Proposition \ref{prop:associative}, Equation \ref{eq:agn-binary} is still a semigroup after we replace $+$ by a polynomial of $\bm{x}$ and $\bm{y}$ as long as the polynomial is associative and symmetric as a binary operation.
We call the polynomials with this property \emph{associative symmetric polynomials}, which are characterized by the following theorem.
Since the original paper only gives a brief explanation, we give detailed proof in Appendix \ref{sec:proof-assoc-sym-poly}.

\begin{theorem}[Characterization of Associative Symmetric Polynomials, Commutative Case of \parencite{yoshida1963some}] \label{theorem:assoc-symmetric-poly}
An associative symmetric polynomial of $x$ and $y$ is one of the following three forms:
\begin{equation}
    x \ast y = \begin{cases}
        \alpha
        \\
        \alpha + x + y
        \\
        \frac{\beta (\beta - 1)}{\gamma} + \beta(x + y) + \gamma x y & (\gamma \neq 0)
        ,
    \end{cases}
\end{equation}
where $\alpha, \beta, \gamma$ are coefficients.
\end{theorem}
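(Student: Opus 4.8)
The plan is to characterize all polynomials $P\in\R[x,y]$ that are symmetric, $P(x,y)=P(y,x)$, and associative, $P(P(x,y),z)=P(x,P(y,z))$, by first bounding the degree of $P$ and then solving the (very short) resulting constraints, using Proposition~\ref{prop:associative} to handle the converse cleanly. The first step is a degree bound. Write $P$ as a polynomial in $x$ with coefficients in $\R[y]$, $P(x,y)=\sum_{i=0}^{d}p_i(y)\,x^i$ with $p_d\not\equiv 0$, and suppose $d\geq 1$. Substituting $u=P(x,y)$, which has $x$-degree $d$, into the first slot shows that the top term of $P(P(x,y),z)$ in $x$ is $p_d(z)\,p_d(y)^d\,x^{d^2}$; since $p_d(z)p_d(y)^d$ is a nonzero element of $\R[y,z]$ there is no cancellation, so $P(P(x,y),z)$ has $x$-degree exactly $d^2$. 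On the other hand $P(x,P(y,z))$ has $x$-degree $d$. Equating the two polynomials forces $d^2=d$, hence $d\in\{0,1\}$; by the symmetry of this argument in $x$ and $y$ (using $P(x,y)=P(y,x)$), the $y$-degree equals the $x$-degree.

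If $d=0$, then $P(x,y)=g(y)$ does not involve $x$, and symmetry gives $g(y)=g(x)$ for all $x,y$, so $g$ is a constant $\alpha$ --- the first form. If $d=1$, then $P$ is affine in $x$ and, by the degree bound, affine in $y$, so the only monomials available are $1,x,y,xy$, giving $P(x,y)=axy+bx+cy+e$; symmetry forces $b=c$, so $P(x,y)=a xy+b(x+y)+e$.

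Now impose associativity on $P(x,y)=a xy+b(x+y)+e$. Expanding both $P(P(x,y),z)$ and $P(x,P(y,z))$ and comparing coefficients (the $xyz$, $xy$, $xz$, $yz$, and constant coefficients all match automatically, and the $y$-coefficients match), the identity $P(P(x,y),z)=P(x,P(y,z))$ reduces to the single relation $b^2=ae+b$. If $a\neq 0$, this gives $e=b(b-1)/a$, i.e.\ the third form with $\gamma:=a\neq 0$ and $\beta:=b$. If $a=0$, the relation becomes $b^2=b$, so $b\in\{0,1\}$: $b=0$ yields the constant $e$ (the first form), and $b=1$ yields $P(x,y)=x+y+e$, the second form with $\alpha=e$. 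For the converse, the first two forms are visibly symmetric and associative, and the third satisfies $\ell(P(x,y))=(ax+b)(ay+b)+\bigl(ae+b-b^2\bigr)=\ell(x)\cdot\ell(y)$ for the affine bijection $\ell(t)=\gamma t+\beta$ exactly when $e=\beta(\beta-1)/\gamma$; hence the third form equals $\ell^{-1}(\ell(x)\cdot\ell(y))$, so it is associative and commutative by Proposition~\ref{prop:associative} because multiplication on $\R$ is.

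The main obstacle is the degree bound of the first step: one must be sure the top-degree term of $P(P(x,y),z)$ in $x$ genuinely survives and does not cancel against lower terms, which is why we track it explicitly as a nonzero polynomial in the remaining variables. Once $d\leq 1$ is established, the rest is a bounded, essentially mechanical computation on the ansatz $axy+b(x+y)+e$, and the affine-conjugation viewpoint of Proposition~\ref{prop:associative} makes both the emergence of the constraint $e=\beta(\beta-1)/\gamma$ and the converse transparent.
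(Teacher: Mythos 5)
Your proof is correct and follows essentially the same route as the paper: bound the degree of $P$ in each variable by comparing the $x$-degrees of $(x\ast y)\ast z$ and $x\ast(y\ast z)$, reduce by symmetry to the ansatz $\gamma xy+\beta(x+y)+\alpha$, and extract the single constraint $\alpha\gamma=\beta(\beta-1)$, splitting into the cases $\gamma\neq 0$ and $\gamma=0$ with $\beta\in\{0,1\}$. Your leading-term bookkeeping for the degree bound is somewhat cleaner than the paper's coefficient-of-$x^{n^2}$ recursion, and the explicit converse check via conjugation by $\ell(t)=\gamma t+\beta$ is a nice (if not strictly required) addition.
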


By applying this theorem to Proposition \ref{prop:associative} for $\mathcal{X}$ with elementwise product and division, we obtain the following three kinds of Abelian semigroup operations:
\begin{numcases}{\bm{x} \circ \bm{y} =}
    \rho^{-1}(\bm{\alpha}) &
    \label{eq:semigroup-constant}
    \\
    \rho^{-1}\left( \rho(\bm{x}) + \rho(\bm{y}) + \bm{\alpha} \right) &
    \label{eq:semigroup-group}
    \\
    \rho^{-1}(\bm{\beta} \otimes (\bm{\beta} - \bm{1}) \oslash \bm{\gamma} + \bm{\beta} \otimes (\rho(\bm{x}) + \rho(\bm{y}))
    \nonumber \\
    \ \ \ \ \ \ \ \ \ \ \ \ \ \ \ \ \ \ \ \ \ \ \ \ \ \ \ \ \ \ \ \ \ \ \ \ \ \
    + \bm{\gamma} \otimes \rho(\bm{x}) \otimes \rho(\bm{y}) ), &
    \label{eq:semigroup-main}
\end{numcases}
where $\rho: \mathcal{X} \rightarrow \mathcal{X}$ is an invertible function and $\bm{\alpha}, \bm{\beta}, \bm{\gamma} \in \mathcal{X}$ are parameters ($\bm{\gamma}$ is nonzero for all elements in Equation \ref{eq:semigroup-main}).
Equation \ref{eq:semigroup-constant} is a constant case, on which we do not put a focus due to its trivialness.
Equation \ref{eq:semigroup-group} forms a group where $\bm{e} = \rho^{-1}(-\bm{\alpha})$, $\bm{x}^{-1} = \rho^{-1}(-\rho(\bm{x}) - 2 \bm{\alpha})$.
It can be expressed by the Abelian group network with
$\phi(\bm{x}) = \rho(\bm{x}) + \bm{\alpha}$
and
$\phi^{-1}(\bm{x}) = \rho^{-1}(\bm{x} - \bm{\alpha})$
in Equation \ref{eq:agn-binary}.
Equation \ref{eq:semigroup-main} is a semigroup but not a group.
Just using this equation is fine, but we propose a simpler form, as the Abelian semigroup network:
\begin{equation} \label{eq:asn-binary}
    \bm{x} \circ \bm{y} = \phi^{-1}(\phi(\bm{x}) \otimes \phi(\bm{y})),
\end{equation}
where $\phi: \mathcal{X} \rightarrow \mathcal{X}$ is a trainable invertible function typically modeled by an invertible neural network.
This is a special case of Equation \ref{eq:semigroup-main} when $\bm{\beta} = \bm{0}, \bm{\gamma} = \bm{1}$ and therefore is a semigroup.
Conversely, the Abelian semigroup network can express Equation \ref{eq:semigroup-main} by
\begin{equation}
    \phi(\bm{x}) = \bm{\gamma} \otimes \rho(\bm{x}) + \bm{\beta}, \
    \phi^{-1}(\bm{x}) = \rho^{-1}((\bm{x} - \bm{\beta}) \oslash \bm{\gamma}).
\end{equation}
Moreover, the Abelian semigroup network can approximate the Abelian group network: $\phi'^{-1}(\phi'(\bm{x}) + \phi'(\bm{y}))$.
Let $\mathcal{X}_{>0} = \R_{>0}^d$, where $\mathcal{X} = \R^d$.
One construction is approximating a bijective function $\pi: \mathcal{X} \rightarrow \mathcal{X}_{>0}$,
\begin{equation}
    \pi(\bm{x}) = \exp(\phi'(\bm{x})), \
    \pi^{-1}(\bm{x}) = \phi'^{-1}(\log(\bm{x}))
\end{equation}
by $\phi$, where $\exp$ and $\log$ act elementwise.
This is possible in any compact subset of $\mathcal{X}$.
From the previous discussions so far, the Abelian semigroup network can approximate any binary operation which is homeomorphic to an associative symmetric polynomial.
We confirm this fact in the experiments.

\subsection{Multiset Architecture}
So far, we proposed the binary operation architecture of the Abelian group network and Abelian semigroup network represented by Equation \ref{eq:agn-binary} and \ref{eq:asn-binary}, respectively.
By calculating Equation \ref{eq:sum-circ}, we can write the two models for multiset input $\bm{X} = \{\bm{x_1}, \ldots \bm{x_n}\} \in \mathcal{X}$ in simple forms:
\begin{equation} \label{eq:abelian-group-set}
    f(\bm{X}) = \phi^{-1} \left( \sum_{\bm{x} \in \bm{X}} \phi(\bm{x}) \right),
\end{equation}
\begin{equation}
    f(\bm{X}) = \phi^{-1}(\phi(\bm{x}_1) \otimes \cdots \otimes \phi(\bm{x}_n)),
\end{equation}
where the invertible function $\phi: \mathcal{X} \rightarrow \mathcal{X}$ is typically modeled by an invertible neural network.
We can train this network by usual deep learning strategies, e.g., minimizing a loss function with minibatch stochastic gradiant descent from dataset $\{(\bm{X}_i, \bm{y}_i)\}_{i=1}^N$, where $\bm{X}_i \in \mathbb{N}^\mathcal{X}$ and $\bm{y}_i \in \mathcal{X}$.

Now, we consider the size-generalization ability of the multiset architectures.
An intuitive explanation is as follows.
If trained only on multisets of two elements, our models can learn the correct binary operation.
Therefore, they generalize to multisets of larger size.
For the Abelian group network, since the error bound for small multiset propagates in the form of the sum, we can derive the following theorem.

\begin{theorem}[Size Generalization of Abelian Group Networks] \label{thm:size-generalization}
Let $f^* : \mathbb{N}^\mathcal{X} \rightarrow \mathcal{X}$ be a target function expressed by a composition of Abelian semigroup $(\mathcal{X}, \circ)$: $\displaystyle f^*(\bm{X}) = \circsum_{\bm{x} \in \bm{X}} \bm{x}$.
Let $f : \mathbb{N}^\mathcal{X} \rightarrow \mathcal{X}$ be a multiset architecture of the Abelian group network: $f(\bm{X}) = \phi^{-1} \left( \sum_{\bm{x} \in \bm{X}} \phi(\bm{x}) \right)$.
When
\begin{equation}
    \|f(\bm{X}) - f^*(\bm{X})\| < \epsilon
\end{equation}
holds for any $\bm{X} (\in \mathbb{N}^\mathcal{X})$ whose size is smaller than $a (\geq 2)$, then
\begin{equation}
    \left\| f(\bm{X}) - f^*(\bm{X}) \right\| < \frac{\epsilon \left( (a K_1 K_2)^{\ceil{\log_a b}} - 1 \right)} {a K_1 K_2 - 1}
\end{equation}
holds for any $\bm{X} (\in \mathbb{N}^\mathcal{X})$ whose size is $b (\geq a)$, under the condition that the Lipschitz constants of $\phi$ and $\phi^{-1}$ are $K_1$ and $K_2$, respectively.
\end{theorem}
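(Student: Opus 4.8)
The plan is to turn the small‑multiset error bound into the large‑multiset bound through a hierarchical (tree‑like) decomposition, using that both $f$ and $f^*$ are compatible with partitioning of the input multiset. The first step is to record the two ``partition'' identities: for any way of writing $\bm{X} = \bm{X}_1 + \cdots + \bm{X}_s$ as a multiset sum with each $\bm{X}_i$ nonempty,
\begin{equation*}
  f(\bm{X}) = f\bigl(\{f(\bm{X}_1), \ldots, f(\bm{X}_s)\}\bigr), \qquad f^*(\bm{X}) = f^*\bigl(\{f^*(\bm{X}_1), \ldots, f^*(\bm{X}_s)\}\bigr).
\end{equation*}
The first is immediate from $\sum_{\bm{x} \in \bm{X}}\phi(\bm{x}) = \sum_i \sum_{\bm{x} \in \bm{X}_i}\phi(\bm{x}) = \sum_i \phi(f(\bm{X}_i))$ (applying $\phi\circ\phi^{-1}=\mathrm{id}$ on each block); the second is exactly associativity and commutativity of $\circ$, which hold since $(\mathcal{X}, \circ)$ is an Abelian semigroup.

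Next I would set up the recursion. Fix $\bm{X}$ with $|\bm{X}| = b \ge a$ and split it into $a$ sub‑multisets $\bm{X}_1, \ldots, \bm{X}_a$, each of size at most $\ceil{b/a}$. Writing $\bm{Y} = \{f(\bm{X}_i)\}_{i=1}^a$ and $\bm{Y}^* = \{f^*(\bm{X}_i)\}_{i=1}^a$, the identities above and the triangle inequality give
\begin{equation*}
  \|f(\bm{X}) - f^*(\bm{X})\| = \|f(\bm{Y}) - f^*(\bm{Y}^*)\| \le \|f(\bm{Y}) - f(\bm{Y}^*)\| + \|f(\bm{Y}^*) - f^*(\bm{Y}^*)\|.
\end{equation*}
The second term is below $\epsilon$ by hypothesis, since $\bm{Y}^*$ has at most $a$ elements. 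For the first term, $f$ on an $a$‑element multiset is $\phi^{-1}$ applied to a sum of $a$ values of $\phi$, so the Lipschitz bounds and the triangle inequality on that sum yield
\begin{equation*}
  \|f(\bm{Y}) - f(\bm{Y}^*)\| \le K_2 \sum_{i=1}^a \|\phi(f(\bm{X}_i)) - \phi(f^*(\bm{X}_i))\| \le a K_1 K_2 \max_{1 \le i \le a} \|f(\bm{X}_i) - f^*(\bm{X}_i)\|.
\end{equation*}
Taking the supremum over all $\bm{X}$ of a given size and letting $\mathcal{E}(m)$ denote the worst‑case error over multisets of size at most $m$, this reads $\mathcal{E}(b) \le \epsilon + a K_1 K_2\, \mathcal{E}(\ceil{b/a})$ for $b \ge a$, with $\mathcal{E}(m) \le \epsilon$ for $m \le a$.

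Finally I would unroll the recursion. Using $\ceil{\ceil{m/a}/a} = \ceil{m/a^2}$ repeatedly, after $k$ steps the argument is $\ceil{b/a^k}$, which drops to at most $a$ exactly once $k \ge \ceil{\log_a b} - 1$; bounding the terminal term by $\epsilon$ then gives $\mathcal{E}(b) \le \epsilon \sum_{j=0}^{\ceil{\log_a b}-1}(aK_1K_2)^j$, i.e. the geometric sum $\epsilon\bigl((aK_1K_2)^{\ceil{\log_a b}} - 1\bigr)/(aK_1K_2 - 1)$, which is the claimed bound; here $aK_1K_2 > 1$ because $a \ge 2$ and composing the Lipschitz inequalities for $\phi$ and $\phi^{-1}$ forces $K_1K_2 \ge 1$, so the denominator is positive. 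I expect the only real obstacle to be the bookkeeping with the ceilings: handling multisets whose size is not a power of $a$ via the near‑equal partition, and checking that the recursion bottoms out after precisely $\ceil{\log_a b} - 1$ steps so that the exponent in the final estimate is $\ceil{\log_a b}$; the analytic content is just the triangle inequality and Lipschitz continuity applied along the decomposition tree.
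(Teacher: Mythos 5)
Your proposal is correct and follows essentially the same route as the paper: split $\bm{X}$ into $a$ balanced sub-multisets of size at most $\ceil{b/a}$, use the partition identities for $f$ and $f^*$, apply the triangle inequality through $f(\{f^*(\bm{X}_i)\})$ together with the Lipschitz constants $K_1, K_2$, and resolve the resulting recursion into the geometric-sum bound (the paper phrases this as induction on $b$ with the closed-form bound, while you unroll the worst-case recursion, which is the same argument; your ceiling bookkeeping via $\ceil{\ceil{m/a}/a} = \ceil{m/a^2}$ is if anything slightly more careful than the paper's).
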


Appendix \ref{sec:proof-size-generalization} provides the proof.
For the Abelian semigroup network, the error bound for small multiset propagates in the form of the product with the values $\phi(\bm{x}_i)$, which prevents us from inducing the bound like above.
However, it still has the size-generalization ability in most real applications where the values are not too large.
We confirm this by an experiment in Section \ref{sec:toy-example}.

\section{Experiments} \label{chap:experiment}
In this section, we describe two experiments on the effectiveness of the proposed architectures.
First, we check the size generalization of our models on synthetic data.
Next, we present a real-world problem that the binary operation architecture of the Abelian group network is useful.
We train word analogy functions over the fixed vectors of word2vec.

\subsection{Common Settings}
We implemented the neural networks in the PyTorch framework \parencite{Paszke2019PyTorchAI} and optimized them using the Adam algorithm \parencite{Kingma2015AdamAM}.
The hyperparameters for each model in each problem were tuned with validation datasets using the Bayesian optimization of the Optuna framework \parencite{Akiba2019OptunaAN}.
The experiments were run on Intel Xeon E5-2695 v4 with NVIDIA Tesla P100 GPU.
See Appendix \ref{sec:experiment-detail} for the detailed settings, such as the model architecture and the range of hyperparameters.

\subsection{Learning Synthetic Data} \label{sec:toy-example}
To check the size generalization over semigroup and group operations on multisets, we trained the models on synthetic data.
The binary operation forms of the examined functions are $x \circ y = x + y, x + y + 1, \sqrt[^3]{x^3 + y^3}$ (group cases) and $x \circ y = xy, x + y + \frac{xy}{2}$ (semigroup cases).

\paragraph{Setup}
For the single-dimensional invertible neural network of the Abelian group network and Abelian semigroup network, we used monotonic networks \parencite{Sill1997MonotonicN}.
We tuned the hyperparameters, the number of groups and the number of units for each group.
As a baseline, we used DeepSets \parencite{Zaheer2017DeepS}, one of the most popular models for (multi)set learning.
It incorporates two multilayer perceptrons (MLP).
We used the same number of hidden layers for the two MLPs and tuned the hyperparameters, the number of layers in each MLP, the middle dimension, and the hidden dimension.
Each model was trained to minimize the mean squared error on a training set.

\paragraph{Data Generation}
As training data, we generated $500$ multisets of size $\{2,3,4\}$ (chosen uniformly random).
All the elements were single-dimensional and selected uniformly at random from $[-5.0, 5.0]$.
A validation data of $100$ multisets were generated from the same distribution.
We prepared two kinds of test data.
One consisted of $100$ multisets drawn from the same distribution as the training and validation data, which we refer to by \emph{small}.
To see the size-generalization ability, the other consisted of $100$ multisets of size $\{10,11,12\}$ (chosen uniformly at random) with the same element distribution, which we refer to by \emph{large}.

\paragraph{Results}
\begin{table}[ht]
    \centering
    \caption{Mean squared error comparison between the models for each function. The upper three operations are groups and the lower two equations are semigroups. Square root of the values are presented. Smaller is better.}
    \begin{tabular}{c|c|c||ccc}
        \toprule
        $x \circ y$ & & DeepSets & AGN & ASN
        \\
        \midrule
        $x + y$ & small & 0.00226 & \textbf{3.63e-7} & 0.0832
        \\
        & large & 0.908 & \textbf{0.0366} & 0.309
        \\
        \hline
        $x + y + 1$ & small & 0.00772 & \textbf{4.17e-7} & 0.136
        \\ 
        & large & 0.0335 & \textbf{0.0132} & 0.956
        \\
        \hline
        $\sqrt[^3]{x^3 + y^3}$ & small & \textbf{0.0844} & 0.284 & 0.427
        \\
        & large & \textbf{0.229} & 0.636 & 1.26
        \\
        \hline
        \hline
        $xy$ & small & 13.0 & 36.7 & \textbf{0.00000295}
        \\
        & large & 28500 & 28390 & \textbf{31.5}
        \\
        \hline
        $x + y + \frac{xy}{2}$ & small & 0.965 & 7.08 & \textbf{0.000660}
        \\
        & large & 194 & 193 & \textbf{1.22}
        \\
        \bottomrule
    \end{tabular}
    \label{tbl:toy-example}
\end{table}

Table \ref{tbl:toy-example} summarizes the results.
For the group functions, all models including the Abelian semigroup network performed well.
This is consistent with the fact that group operations can be approximated by the Abelian semigroup network, as discussed in Section \ref{sec:asn}.
While the Abelian group network was better on the other two cases, DeepSets outperformed the Abelian group network on $\sqrt[^3]{x^3 + y^3}$.
This is possibly due to the optimization of MLPs in DeepSets being easier than monotonic networks in the Abelian group network and Abelian semigroup network.
Invertible neural networks for the single-dimensional case that are easy to optimize are important for future work.
For the semigroup operations, as well as DeepSets, the Abelian group network did not work well.
This is reasonable because these semigroup operations can not be expressed by the Abelian group network.

On the size generalization, although DeepSets worked fairly well, our models worked better.
For example, the Abelian semigroup network was better than DeepSets on \emph{large} of $x+y$ despite being worse on \emph{small};
The Abelian group network had similar results on $xy$ and $x + y + \frac{xy}{2}$.

\subsection{Word Analogies} \label{sec:word-analogy}
The vector representations of words by word2vec \parencite{Mikolov2013EfficientEO, Mikolov2013DistributedRO} trained only on large unlabeled text data are known to capture linear regularities between words.
For example, vec(``king'') $-$ vec(``man'') $+$ vec(``woman'') results in the most similar vector to vec(``queen'').
Formally, for predicting a word $d$ in a relation $a:b = c:d$, the word with the most similar vector to $\bm{b} - \bm{a} + \bm{c}$ (we denote the corresponding vector for each word by using a bold symbol) is selected in terms of the cosine similarity:
\begin{equation}
    \cos(\bm{v_1}, \bm{v_2}) = \frac{\bm{v_1} \cdot \bm{v_2}}{\| \bm{v_1} \| \| \bm{v_2} \|}.
\end{equation}
Usually, the words $a, b, c$ are excluded from the candidate vocabulary, under the assumption that a common word does not appear in one analogy example.
Although this assumption is reasonable in many cases, it prevents us from solving certain problems such as a past tense verb analogy ``do'':``did'' $=$ ``split'':``split'' or a plural noun analogy ``apple'':``apples'' $=$ ``deer'':``deer''.
On the other hand, if we do not exclude the words $a, b, c$ from the candidates, word2vec suffers from severe performance degradation e.g., falling from $73.59\%$ to $20.64\%$ in our preliminary experiment on the Google analogy test set.
This is due to the nature of the word2vec algorithm: the result of the simple arithmetic calculation $\bm{b} - \bm{a} + \bm{c}$ has a high probability of being close to $\bm{b}$ or $\bm{c}$ in the cosine similarity, especially in a high dimensional space.
One approach to mitigate this issue is to use richer functions than addition and subtraction.
We propose to model a word analogy function by $\bm{b} \circ \bm{a}^{-1} \circ \bm{c}$ where $\circ$ is a group.
Then the original calculation $\bm{b} - \bm{a} + \bm{c}$ can be seen as a special case when $\circ = +$.
In this experiment, we trained the Abelian group network from labeled dataset and compared it with the original word2vec and another learning-based approach.

\begin{table*}[ht]
\centering
\caption{Accuracy on bigger analogy test set when we did not exclude $a, b, c$ from the candidates.}
\label{tbl:eval-bats-main}
\begin{tabular}{lrlll}
\toprule
{} &   num &             WV &              WV + MLP &                WV + AGN \\
\midrule
Overall       &  3314 &   177 (5.34\%) &         565 (17.05\%) &  \textbf{690 (20.82\%)} \\
\hline
Inflectional  &   900 &  100 (11.11\%) &         317 (35.22\%) &  \textbf{435 (48.33\%)} \\
Derivational  &   882 &     4 (0.45\%) &           15 (1.70\%) &    \textbf{20 (2.27\%)} \\
Lexicographic &   632 &    52 (8.23\%) &         154 (24.37\%) &  \textbf{172 (27.22\%)} \\
Encyclopedic  &   900 &    21 (2.33\%) &  \textbf{79 (8.78\%)} &             63 (7.00\%) \\
\bottomrule
\end{tabular}
\end{table*}

\begin{table*}[ht]
\centering
\caption{Accuracy on bigger analogy test set when we excluded $a, b, c$ from the candidates.}
\label{tbl:eval-bats-main-exclude-abc}
\begin{tabular}{lrlll}
\toprule
{} &   num &                      WV &       WV + MLP &                 WV + AGN \\
\midrule
Overall       &  3314 &           864 (26.07\%) &  569 (17.17\%) &  \textbf{1065 (32.14\%)} \\
\hline
Inflectional  &   900 &           614 (68.22\%) &  324 (36.00\%) &   \textbf{656 (72.89\%)} \\
Derivational  &   882 &  \textbf{103 (11.68\%)} &    17 (1.93\%) &             98 (11.11\%) \\
Lexicographic &   632 &            83 (13.13\%) &  151 (23.89\%) &   \textbf{205 (32.44\%)} \\
Encyclopedic  &   900 &             64 (7.11\%) &    77 (8.56\%) &   \textbf{106 (11.78\%)} \\
\bottomrule
\end{tabular}
\end{table*}

\paragraph{Word Embedding}
We used a $300$-dimensional word2vec model for $3$ billion words trained on Google News corpus of about $100$ billion words \footnote{https://code.google.com/archive/p/word2vec/}.
We normalized each word embedding by $L^2$ norm, following the implementation of the Gensim framework \parencite{rehurek_lrec}.

\paragraph{Word Analogy Models}
We compared three different models for a word analogy function $f: \R^{300} \times \R^{300} \times \R^{300} \rightarrow \mathbb{R}^{300}$ that takes the vectors of words $a, b, c$ and predicts the vector of a word $d$.
In the original word2vec,
\begin{equation}
    f(\bm{a}, \bm{b}, \bm{c}) = \bm{b} - \bm{a} + \bm{c}.
\end{equation}
As a baseline, we prepared a learning algorithm based on a multilayer perceptron, which we denote by WV + MLP:
\begin{equation}
    f(\bm{a}, \bm{b}, \bm{c}) = \mathrm{MLP}(\bm{b} - \bm{a} + \bm{c}),
\end{equation}
where we train $\mathrm{MLP}: \mathbb{R}^{300} \rightarrow \mathbb{R}^{300}$.
In the proposed method (WV + AGN), $f$ is modeled as follows:
\begin{align}
\begin{split}
    f(\bm{a}, \bm{b}, \bm{c}) &= \bm{b} \circ \bm{a}^{-1} \circ \bm{c}
    \\
    &= \phi^{-1}(\phi(\bm{b} - \bm{a} + \bm{c})).
\end{split}
\end{align}
For the invertible neural network $\phi:\mathbb{R}^{300} \rightarrow \mathbb{R}^{300}$, we adopted the Glow architecture \parencite{Kingma2018GlowGF}.

\paragraph{Setup}
For WV + MLP and WV + AGN, we minimized the loss function:
\begin{equation}
    \mathrm{loss}_f(\bm{a}, \bm{b}, \bm{c}, \bm{d}) = -\cos(f(\bm{a}, \bm{b}, \bm{c}), \bm{d}),
\end{equation}
on the training set.
We measured the accuracy on the test set by calculating the most similar vector to the model output for each word:
\begin{equation}
    \argmax_{\bm{d} \in \mathcal{V}} \cos(f(\bm{a}, \bm{b}, \bm{c}), \bm{d}),
\end{equation}
where $\mathcal{V}$ is the set of all word embeddings in the word2vec model.
For reference, we also tested the case where we removed the words $a, b, c$ from the candidates:
\begin{equation}
    \argmax_{\bm{d} \in \mathcal{V} \setminus \{a, b, c\}} \cos(f(\bm{a}, \bm{b}, \bm{c}), \bm{d}),
\end{equation}

\paragraph{Datasets}
The bigger analogy test set (BATS) \parencite{Rogers2016AnalogybasedDO} consists of $4$ categories, each of which has $10$ smaller subcategories of $50$ unique relations.
We split the bigger analogy test set into a training set ($60\%$), a validation set ($20\%$), and a test set ($20\%$).
First, for each subcategory, we extracted the pairs included in the word2vec vocabularies and randomly split them into the three sets by each ratio.
Then for each set, we generated all the combinations of the pairs for each subcategory and concatenated them among all subcategories.
Some relations contain multiple acceptable candidates, such as \emph{mammal} and \emph{canine} for hypernyms of \emph{dog}.
We used the first candidate for training and accepted any for the test.
Table \ref{tbl:bats-detail} in Appendix \ref{sec:wv-detail} summarizes the explanation and the number of extracted pairs for all subcategories.
Also, we conducted a transfer experiment to the Google analogy test set \parencite{Mikolov2013EfficientEO} in Appendix \ref{sec:wv-detail}.

\paragraph{Results}
Table \ref{tbl:eval-bats-main} summarizes the results on the bigger analogy test set when we used the whole vocabulary.
The proposed method outperformed WV + MLP in all categories except Encyclopedic.
The accuracy comparison when we excluded $a, b, c$ is shown in Table \ref{tbl:eval-bats-main-exclude-abc}.
In this setting, WV + MLP performed poorly compared even with the original WV.
On the other hand, the proposed method still worked better than WV.
We show the full results for each subcategory of the bigger analogy test set in Table \ref{tbl:eval-bats-detail} and \ref{tbl:eval-bats-detail-exclude-abc} in Appendix \ref{sec:wv-detail}.
We explain the results on transferring test to the Google analogy test set in Appendix \ref{sec:wv-detail}.
Overall, we can conclude that while the naive learning approach overfitted to the certain dataset and evaluation criteria, the inductive biases incorporated in the Abelian group network successfully prevented the model from overfitting.

\section{Conclusion and Future Work} \label{chap:conclusion}
In this work, we proposed two novel neural network architectures, the Abelian group network and Abelian semigroup network, and showed their theoretical properties.
To investigate the effectiveness of our models, we conducted two experiments.
The first experiment on synthetic data validated our theories on expressive power and size generalization.
In the second experiment, we presented that the binary operation version of the Abelian group network is useful for modeling a word analogy function.
Our method improved the performance of word2vec, especially when we searched the whole vocabulary for the candidates.

One of the technical obstacles facing our models when it comes to real-world (multi)set problems is that the dimensions of each element of the input (multi)set and the output should be equal.
By direct implementation, our methods can not be used for (multi)set classification problems such as document-category classification, where we need to output a vector of label-number dimension from a (multi)set of high-dimensional vectors.
This issue can be mitigated by using some other techniques such as label embedding \parencite{Weston2011WSABIESU}.
We believe our models with a theoretical size-generalization guarantee give a new insight into the field of (multi)set learning.

Further, modeling a (multi)set function is a fundamental setting that appears not only in direct real-world applications but also as a component of more complex objects such as graphs \parencite{Xu2019HowPA}.
Constructing size-generalizable graph neural networks by using our models as building blocks remains as future research.

Finally, let us summarize some open problems.
Although the Abelian group network is universal, the expressive power of the Abelian semigroup network is unknown as yet.
Also, the theoretical reason behind the good size-generalization performance of DeepSets in the experiment on learning synthetic data remains a topic of investigation.

\section*{Acknowledgements}
We would like to appreciate Takeshi Teshima for providing knowledge on invertible neural networks.

\newpage
\printbibliography

\appendix
\newpage
\onecolumn
\section{Proofs}
\subsection{Proof of Proposition \ref{prop:associative}} \label{sec:proof-semigroup-conserve}
\begin{proof}
Associativity:
\begin{align}
\begin{split}
    (\bm{x} \circ \bm{y}) \circ \bm{z}
    &= \rho^{-1}(\rho(\rho^{-1}(\rho(\bm{x}) \ast \rho(\bm{y}))) \ast \rho(\bm{z})) \\
    &= \rho^{-1}((\rho(\bm{x}) \ast \rho(\bm{y})) \ast \rho(\bm{z})) \\
    &= \rho^{-1}(\rho(\bm{x}) \ast (\rho(\bm{y}) \ast \rho(\bm{z}))) \ (\because \textrm{Associativity of} \ \ast) \\
    &= \rho^{-1}(\rho(\bm{x}) \ast \rho(\rho^{-1}(\rho(\bm{y}) \ast \rho(\bm{z})))) \\
    &= \bm{x} \circ (\bm{y} \circ \bm{z}).
\end{split}
\end{align}
Commutativity:
\begin{align}
\begin{split}
    \bm{y} \circ \bm{x}
    &= \rho^{-1}(\rho(\bm{y}) \ast \rho(\bm{x})) \\
    &= \rho^{-1}(\rho(\bm{x}) \ast \rho(\bm{y})) \ (\because \textrm{Commutativity of} \ \ast) \\
    &= \bm{x} \circ \bm{y}.
\end{split}
\end{align}
\end{proof}

\subsection{Proof of Theorem \ref{thm:universality-agn}} \label{sec:proof-universality-agn}

First, we review the concept of the Lie group.
A Lie group is a group over a manifold in which the group operation $(x,y) \mapsto x \circ y$ and the inverse function $x \mapsto x^{-1}$ are both differentiable.
An Abelian Lie group is a Lie group that satisfies commutativity.
The real numbers $\mathbb{R}$ with the addition $+$ forms a Lie group, which we denote by $(\mathbb{R}, +)$.
Also, the torus $\mathbb{T} = \mathbb{R} / 2 \pi \mathbb{Z}$ with the addition $+$ modulo $2 \pi \mathbb{Z}$ forms a Lie group, which we denote by $(\mathbb{T}, +)$.
It is known that any connected Abelian Lie group is isomorphic to $(\mathbb{R}, +)^k \times (\mathbb{T}, +)^h$ for some $k, h \in \mathbb{N}$ (Section 4.4.2 of \parencite{procesi2007lie}).

Now, we give the proof of Theorem \ref{thm:universality-agn}.
\begin{proof}
We use the fact that any connected Abelian Lie group is isomorphic to $(\mathbb{R}, +)^{k} \times (\mathbb{T}, +)^{h}$ for some $k, h \in \mathbb{N}$.
The Abelian Lie group over $\mathcal{X}$ is the special cases of this and therefore any $\circ$ can be represented as
\begin{equation}
    \bm{x} \circ \bm{y} = \pi^{-1} (\pi(\bm{x}) + \pi(\bm{y})),
\end{equation}
where $\pi: \mathcal{X} \rightarrow \mathcal{X}$ is a homeomorphic function in terms of Lie groups, i.e., $\pi(\cdot)$ and $\pi^{-1}(\cdot)$ are analytic.
Take any $\epsilon > 0$ and a compact subset $\mathcal{K} \subset \mathcal{X}$.
We denote the image of $\mathcal{K} \times \mathcal{K}$ through the function $(\bm{x}, \bm{y}) \mapsto \pi(\bm{x}) + \pi(\bm{y})$ by
$\mathcal{S}' = \{ \pi(\bm{x}) + \pi(\bm{y}) ~|~ \bm{x}, \bm{y} \in \mathcal{K} \}$.
Let
\begin{equation}
    \mathcal{S} = \{ \bm{s} ~|~ \exists \bm{s}' \in \mathcal{S}' ~ s.t. ~ \|\bm{s} - \bm{s}'\| \leq 2 \epsilon \}
\end{equation}
and
\begin{equation}
    \mathcal{K}' = \mathcal{K} \cup \pi^{-1}(\mathcal{S}).
\end{equation}
Then, we have a Lipschitz constant $L > 0$ of $\pi^{-1}$ over $\mathcal{S}$ since $\pi^{-1}$ is continuous and $\mathcal{S}$ is compact.
Also, from the universality of invertible neural networks \parencite{Teshima2020CouplingbasedIN} for the compact set $\mathcal{K}'$, there exists an invertible neural network $\phi: \mathcal{X} \rightarrow \mathcal{X}$ such that for any $\bm{x} \in \mathcal{K}'$
\begin{equation}
    \|\pi(\bm{x}) - \phi(\bm{x})\| < \frac{\epsilon}{2 L + 1}
\end{equation}
and for any $\bm{x}' \in \pi(\mathcal{K}')$
\begin{equation} \label{eq:inv-error}
    \|\pi^{-1}(\bm{x}') - \phi^{-1}(\bm{x}')\| < \frac{\epsilon}{2 L + 1}.
\end{equation}
Then, for any $\bm{x}, \bm{y} \in \mathcal{K}$, $\phi(\bm{x}) + \phi(\bm{y}) \in \mathcal{S} (\subset \pi(\mathcal{K}'))$ because
\begin{align}
\begin{split}
    \|(\pi(\bm{x}) + \pi(\bm{y})) - (\phi(\bm{x}) + \phi(\bm{y})) \|
    &\leq \|\pi(\bm{x}) - \phi(\bm{x})\| + \|\pi(\bm{y}) + \phi(\bm{y})\|
    \\
    &< \frac{2\epsilon}{2L + 1}
    \\
    &< 2\epsilon.
\end{split}
\end{align}
Therefore, for any $\bm{x}, \bm{y} \in \mathcal{X}$, we have from the Lipshitz continuity of $\pi^{-1}$
\begin{align} \label{eq:lipshitz}
\begin{split}
    \|\pi^{-1}(\pi(\bm{x}) + \pi(\bm{y})) - \pi^{-1}(\phi(\bm{x}) + \phi(\bm{y}))\|
    &\leq L\|(\pi(\bm{x}) + \pi(\bm{y})) - (\phi(\bm{x}) + \phi(\bm{y}))\|
    \\
    &< L \cdot \frac{2\epsilon}{2L+1}
    \\
    &= \frac{2L\epsilon}{2L+1}
\end{split}
\end{align}
and from Equation \ref{eq:inv-error}
\begin{equation} \label{eq:inv-error2}
    \|\pi^{-1}(\phi(\bm{x}) + \phi(\bm{y})) - \phi^{-1}(\phi(\bm{x}) + \phi(\bm{y}))\|
    < \frac{\epsilon}{2L+1}.
\end{equation}
From Equation \ref{eq:lipshitz} and \ref{eq:inv-error2}, for any $\bm{x}, \bm{y} \in \mathcal{K}$, we obtain
\begin{align}
\begin{split}
    \|(\bm{x} \circ \bm{y}) - (\bm{x} \ast \bm{y})\|
    &= \|\pi^{-1}(\pi(\bm{x}) + \pi(\bm{y})) - \phi^{-1}(\phi(\bm{x}) + \phi(\bm{y}))\|
    \\ &\leq \|\pi^{-1}(\pi(\bm{x}) + \pi(\bm{y})) - \pi^{-1}(\phi(\bm{x}) + \phi(\bm{y}))\|
    \\ & \hspace{10pt}
    +\|\pi^{-1}(\phi(\bm{x}) + \phi(\bm{y})) - \phi^{-1}(\phi(\bm{x}) + \phi(\bm{y}))\|
    \\ &\leq \frac{2 L \epsilon}{2L+1} + \frac{\epsilon}{2L+1}
    \\ &= \epsilon.
\end{split}
\end{align}
This concludes that Abelian group networks are universal.
\end{proof}

\subsection{Proof of Theorem \ref{theorem:assoc-symmetric-poly}} \label{sec:proof-assoc-sym-poly}
\begin{proof}
First, we prove that associative polynomials are at most first-order for each variable.
Assume that we have a $n$-order ($n \geq 2$) associative polynomial
\begin{equation}
    x \ast y = \sum_{i=0}^n \sum_{j=0}^n \alpha_{i,j} x^i y^j,
\end{equation}
where $\alpha_{i, j} \in \mathbb{R}$ for $0 \leq i,j \leq n$.
Then, we have
\begin{equation} \label{eq:xy-first}
    (x \ast y) \ast z = \sum_{i=0}^n \sum_{j=0}^n \alpha_{i, j} (\sum_{k=0}^n \sum_{l=0}^n \alpha_{k,l} x^k y^l)^i z^j
\end{equation}
and
\begin{equation} \label{eq:yz-first}
    x \ast (y \ast z) = \sum_{i=0}^n \sum_{j=0}^n \alpha_{i, j} x^i (\sum_{k=0}^n \sum_{l=0}^n \alpha_{k,l} y^k z^l)^i.
\end{equation}
Since $\ast$ is associative, these two must form an identity.
By comparing a coefficient of $x^{n^2}$, we obtain
\begin{equation}
    \sum_{j=0}^n \alpha_{n, j} (\sum_{l=0}^n \alpha_{n, l} y^l)^n z^j = 0.
\end{equation}
If we have $0 \leq j' \leq n$ such that $\alpha_{n, j'} \neq 0$, from the coefficient of $z^{j'}$,
\begin{equation}
    (\sum_{l=0}^n \alpha_{n, l} y^l)^n = 0.
\end{equation}
Recursively, we get $\alpha_{n, 0} = \alpha_{n, 1} = \cdots = \alpha_{n, n} = 0$, which leads to contradiction.
Therefore, we now have 
\begin{equation} \label{eq:zero-lower}
    \alpha_{n, 0} = \alpha_{n, 1} = \cdots = \alpha_{n, n} = 0.
\end{equation}
In the same way, we can also prove
\begin{equation} \label{eq:zero-right}
    \alpha_{0, n} = \alpha_{1, n} = \cdots = \alpha_{n, n} = 0.
\end{equation}
From Equation \ref{eq:zero-lower} and \ref{eq:zero-right} for $n \geq 2$, now we know that associative polynomials are at most first-order for each variable.
Therefore, symmetric associative polynomials have the form:
\begin{equation}
    x \ast y = \alpha + \beta (x + y) + \gamma xy.
\end{equation}
Then we have
\begin{equation}
    (x \ast y) \ast z = \alpha + \beta( (\alpha + \beta(x + y) + \gamma x y) + z) + \gamma (\alpha + \beta(x + y) + \gamma x y) z
\end{equation}
and
\begin{equation}
    x \ast (y \ast z) = \alpha + \beta( (\alpha + \beta(x + y) + \gamma x y) + z) + \gamma (\alpha + \beta(x + y) + \gamma x y) z.
\end{equation}
By solving this identity, we obtain
\begin{equation}
    \alpha \gamma = \beta(\beta - 1).
\end{equation}
This condition is equivalent to the associativity of $\ast$.
It decomposes into three cases: $(\gamma=0, \beta=0)$, $(\gamma=0, \beta=1)$, and $\gamma \neq 0$.
For each case, we obtain
\begin{equation}
    x \ast y = \begin{cases}
        \alpha
        \\
        \alpha + x + y
        \\
        \frac{\beta (\beta - 1)}{\gamma} + \beta(x + y) + \gamma x y & (\gamma \neq 0)
        .
    \end{cases}
\end{equation}
\end{proof}

\subsection{Proof of Theorem \ref{thm:size-generalization}} \label{sec:proof-size-generalization}
\begin{proof}
We prove that for any $\bm{X} \in \mathbb{N}^\mathcal{X}$ of size smaller than $b \geq a$,
\begin{equation} \label{eq:size-error}
    \left\| f(\bm{X}) - f^*(\bm{X}) \right\| < \frac{\epsilon \left( (a K_1 K_2)^{\ceil{\log_a b}} - 1 \right)} {a K_1 K_2 - 1}
\end{equation}
by induction on size $b$.
Note that $K_1 K_2 > 1$ because they are the Lipschitz constants of inverse functions.

\paragraph{Base Case}
When $b=a$, Inequality \ref{eq:size-error} holds.

\paragraph{Inductive Step}
We assume Inequality \ref{eq:size-error} holds for size $b' = 1, \ldots, b - 1$.
We divide $\bm{X}$ of size $b$ into balanced $a$ subsets $\bm{X_1}, \ldots ,\bm{X_a}$ so that $\bm{X} = \bm{X_1} + \cdots \bm{X_a}$ and each $|\bm{X_i}| \leq \ceil{\frac{b}{a}}$.
Then,
\begin{align}
\begin{split}
    \left\| f(\bm{X}) - f^*(\bm{X}) \right\|
    &= \left\| \phi^{-1} \left(\sum_{\bm{x} \in \bm{X}} \phi(\bm{x}) \right) - \circsum_{\bm{x} \in \bm{X}} \bm{x} \right\|
    \\
    &= \left\| \phi^{-1} \left(\sum_{i=1}^a \sum_{\bm{x} \in \bm{X_i}} \phi(\bm{x}) \right) - \circsum_{i=1}^a \left( \circsum_{\bm{x} \in \bm{X_i}} \bm{x} \right) \right\|
    \\
    &= \left\| \phi^{-1} \left(\sum_{i=1}^a \phi(f(\bm{X_i})) \right) - f^*\left( \{f^*(\bm{X_1}), \ldots ,f^*(\bm{X_a}) \} \right) \right\|
    \\
    &\leq \left\| \phi^{-1} \left(\sum_{i=1}^a \phi(f(\bm{X_i})) \right) - \phi^{-1} \left(\sum_{i=1}^a \phi(f^*(\bm{X_i})) \right) \right\| +
    \\& \hspace{10pt}
    \left\| \phi^{-1} \left(\sum_{i=1}^a \phi(f^*(\bm{X_i})) \right) - f^*\left( \{f^*(\bm{X_1}), \ldots ,f^*(\bm{X_a}) \} \right) \right\|
    \\ 
    &\leq K_2 \left\| \sum_{i=1}^a \phi(f(\bm{X_i})) - \phi(f^*(\bm{X_i})) \right\| +
    \\& \hspace{10pt}
    \left\| f \left( \{f^*(\bm{X_1}), \ldots ,f^*(\bm{X_a}) \} \right) - f^*\left( \{f^*(\bm{X_1}), \ldots ,f^*(\bm{X_a}) \} \right) \right\|
    \\
    &< K_2 \left( \sum_{i=1}^a \left\| \phi(f(\bm{X_i})) - \phi(f^*(\bm{X_i})) \right\| \right) + \epsilon
    \\
    &\leq K_2 \left( \sum_{i=1}^a K_1 \left\| f(\bm{X_i}) - f^*(\bm{X_i}) \right\| \right) + \epsilon
    \\
    &= K_1 K_2 \left( \sum_{i=1}^a \left\| f(\bm{X_i}) - f^*(\bm{X_i}) \right\| \right) + \epsilon.
\end{split}
\end{align}

From the assumption on size $\ceil{\frac{b}{a}}$, we obtain
\begin{align}
\begin{split}
    \left\| f(\bm{X}) - f^*(\bm{X}) \right\|
    &< a K_1 K_2 \cdot \frac
        {\epsilon((a K_1 K_2)^{\ceil{\log_a \ceil{\frac{b}{a}}}} - 1)}
        {a K_1 K_2 - 1}
    + \epsilon
    \\
    &< a K_1 K_2 \cdot \frac
        {\epsilon((a K_1 K_2)^{\ceil{\log_a \frac{b}{a}}} - 1)}
        {a K_1 K_2 - 1}
    + \epsilon
    \\
    &< \frac{\epsilon \left( (a K_1 K_2)^{\ceil{\log_a b}} - 1 \right)} {a K_1 K_2 - 1},
\end{split}
\end{align}
which establishes the inductive step.
\end{proof}

\section{Experimental Details} \label{sec:experiment-detail}
Here, we explain the detailed setting and further discussion of the experiments that we did not cover in the main part.

\subsection{Learning Synthetic Data}
\paragraph{Model Architecture}
For the implementation of the monotonic networks, we basically followed the Equation \ref{eq:monotonic-net}, except that we added a coefficient term $s \in \R$ which automatically learn the sign of the weights:
\begin{equation}
    f(x) = \min_{1 \leq k \leq K} \max_{1 \leq j \leq J_k} s \cdot \exp(\tilde{w}^{(k, j)}) \cdot x + b^{(k, j)}.
\end{equation}

\paragraph{Hyperparameters}
All networks were trained by the Adam algorithm of $\mathrm{lr}=10^{-3}$, $\mathrm{beta}=(0.9, 0.999)$ for $1000$ epochs with the batch size of $32$.
Hyperparameters of each model were tuned with the validation dataset using the Optuna framework for each function.
For DeepSets, the number of layers for each MLP was selected from $[2, 8]$ and the middle dimension and hidden dimension were selected from $[2, 32]$.
For the Abelian group network and Abelian semigroup network, the number of groups and the number of units in each group were selected from $[2, 32]$.

\subsection{Word Analogies} \label{sec:wv-detail}
\paragraph{Model architecture}
For the invertible neural network for the Abelian group network and Abelian semigroup network, we implemented Glow architectures based on the FrEIA framework \footnote{https://github.com/VLL-HD/FrEIA}.
We stacked Glow coupling layers and random permutation layers of the dimensions in turn.
For each Glow coupling layer, we used three layer feedforward neural networks with a hyperparameter of \text{hidden\_dim}.

\paragraph{Hyperparameters}
All networks were trained by the Adam algorithm of $\mathrm{lr}=10^{-3}$, $\mathrm{beta}=(0.9, 0.999)$ for $100$ epochs with the batch size of $32$.
The hyperparameters of each model were tuned with the validation dataset using the Optuna framework.
For MLP, the number of layers was selected from $[2, 6]$ and the hidden dimension was selected from $[8, 256]$.
For the Abelian group network, the number of layers was selected from $[2, 6]$ and the hidden dimension was selected from $[8, 256]$.
Weight\_decay was selected from $[0, 10^{-3}]$ for all models.

Table \ref{tbl:word-analogy-hyperparam} summarizes the selected hyperparameters for each model.
\begin{table}[ht]
    \centering
    \caption{Selected hyperparameters in word analogy task.}
    \begin{tabular}{c|ccc}
        & layer\_num & hidden\_dim & weight\_decay
        \\
        \hline
        W2V & - & - & -  \\
        W2V + MLP & 4 & 223 & 6.43e-4  \\
        W2V + AGN & 5 & 151 & 1.60e-4 
    \end{tabular}
    \label{tbl:word-analogy-hyperparam}
\end{table}

\paragraph{Transfer Test} \label{sec:word-analogy-dataset-detail}
\begin{table}
\centering
\caption{Detailed explanation of bigger analogy test set. \emph{pair} refers to the whole relation size and \emph{used} refers to the number included in the word2vec model.}
\label{tbl:bats-detail}
\begin{tabular}{lllrr}
\toprule
     category &              subcategory &                 example &  pair &  used \\
\midrule
 Inflectional &    I01 noun - plural\_reg &            album:albums &    50 &    50 \\
 Inflectional &  I02 noun - plural\_irreg &       ability:abilities &    50 &    48 \\
 Inflectional &    I03 adj - comparative &           angry:angrier &    50 &    49 \\
 Inflectional &    I04 adj - superlative &             able:ablest &    50 &    49 \\
 Inflectional &      I05 verb\_inf - 3pSg &          accept:accepts &    50 &    50 \\
 Inflectional &      I06 verb\_inf - Ving &       achieve:achieving &    50 &    49 \\
 Inflectional &       I07 verb\_inf - Ved &         accept:accepted &    50 &    50 \\
 Inflectional &     I08 verb\_Ving - 3pSg &             adding:adds &    50 &    50 \\
 Inflectional &      I09 verb\_Ving - Ved &            adding:added &    50 &    50 \\
 Inflectional &      I10 verb\_3pSg - Ved &              adds:added &    50 &    50 \\
 \hline
 Derivational &        D01 noun+less\_reg &             arm:armless &    50 &    48 \\
 Derivational &           D02 un+adj\_reg &             able:unable &    50 &    49 \\
 Derivational &           D03 adj+ly\_reg & according:accordingl... &    50 &    49 \\
 Derivational &         D04 over+adj\_reg & ambitious:overambiti... &    50 &    50 \\
 Derivational &         D05 adj+ness\_reg &     amazing:amazingness &    50 &    45 \\
 Derivational &          D06 re+verb\_reg &       acquire:reacquire &    50 &    48 \\
 Derivational &        D07 verb+able\_reg &       accept:acceptable &    50 &    49 \\
 Derivational &        D08 verb+er\_irreg &        achieve:achiever &    50 &    49 \\
 Derivational &      D09 verb+tion\_irreg &       accuse:accusation &    50 &    48 \\
 Derivational &      D10 verb+ment\_irreg & accomplish:accomplis... &    50 &    47 \\
 \hline
 Encyclopedic &    E01 country - capital &           abuja:nigeria &    50 &    37 \\
 Encyclopedic &   E02 country - language &         andorra:catalan &    50 &    36 \\
 Encyclopedic &     E03 UK\_city - county & aberdeen:aberdeenshi... &    50 &    24 \\
 Encyclopedic &   E04 name - nationality &         aristotle:greek &    50 &    23 \\
 Encyclopedic &    E05 name - occupation & andersen:writer/poet... &    50 &    27 \\
 Encyclopedic &       E06 animal - young &         ape:baby/infant &    50 &    50 \\
 Encyclopedic &       E07 animal - sound &             alpaca:bray &    50 &    50 \\
 Encyclopedic &     E08 animal - shelter & ant:anthill/insectar... &    50 &    50 \\
 Encyclopedic &       E09 things - color &     ant:black/brown/red &    50 &    50 \\
 Encyclopedic &        E10 male - female &           actor:actress &    50 &    48 \\
 \hline
Lexicographic &  L01 hypernyms - animals & allosaurus:dinosaur/... &    50 &    50 \\
Lexicographic &     L02 hypernyms - misc & armchair:chair/seat/... &    50 &    50 \\
Lexicographic &      L03 hyponyms - misc & backpack:daypack/kit... &    50 &    50 \\
Lexicographic & L04 meronyms - substance & atmosphere:gas/oxyge... &    50 &    50 \\
Lexicographic &    L05 meronyms - member &          acrobat:troupe &    50 &    50 \\
Lexicographic &      L06 meronyms - part & academia:college/uni... &    50 &    47 \\
Lexicographic & L07 synonyms - intensity & afraid:terrified/hor... &    50 &    50 \\
Lexicographic &     L08 synonyms - exact & airplane:aeroplane/p... &    50 &    50 \\
Lexicographic &  L09 antonyms - gradable & able:unable/incapabl... &    50 &    50 \\
Lexicographic &    L10 antonyms - binary & after:before/earlier... &    50 &    50 \\
\bottomrule
\end{tabular}
\end{table}

\begin{table}
\centering
\caption{Detailed explanation of Google analogy test set. num refers to the whole relation size and used refers to the number included in the word2vec model.}
\label{tbl:google-detail}
\begin{tabular}{lllrr}
\toprule
 category &                 subcategory &                 example &  num &  used \\
\midrule
 Semantic &    capital-common-countries &           Athens:Greece &  506 &   506 \\
 Semantic &               capital-world &           Abuja:Nigeria & 4524 &  4524 \\
 Semantic &                    currency &           Algeria:dinar &  866 &   866 \\
 Semantic &               city-in-state &        Chicago:Illinois & 2467 &  2467 \\
 Semantic &                      family &                boy:girl &  506 &   506 \\
\hline
Syntactic &   gram1-adjective-to-adverb &       amazing:amazingly &  992 &   992 \\
Syntactic &              gram2-opposite & acceptable:unacceptable &  812 &   812 \\
Syntactic &           gram3-comparative &               bad:worse & 1332 &  1332 \\
Syntactic &           gram4-superlative &               bad:worst & 1122 &  1122 \\
Syntactic &    gram5-present-participle &             code:coding & 1056 &  1056 \\
Syntactic & gram6-nationality-adjective &        Albania:Albanian & 1599 &  1599 \\
Syntactic &            gram7-past-tense &          dancing:danced & 1560 &  1560 \\
Syntactic &                gram8-plural &          banana:bananas & 1332 &  1332 \\
Syntactic &          gram9-plural-verbs &      decrease:decreases &  870 &   870 \\
\bottomrule
\end{tabular}
\end{table}
Google analogy test set \parencite{Mikolov2013EfficientEO} includes $19,544$ question pairs ($8,869$ semantic and $10,675$ syntactic).
The semantic questions are composed of five categories: common capital city, all capital cities, currency, city-in-state, and man-woman.
The syntactic questions are composed of $9$ categories: adjective to adverb, opposite, comparative, superlative, present participle, nationality adjective, past tense, plural nouns, and plural verbs.
Table \ref{tbl:google-detail} shows the detailed explanation of the Google analogy test set.
All the words were included in the word2vec vocabulary.

To test the transferability to another dataset, we measured the accuracy on the Google analogy test set.
We compared the models trained on the bigger analogy test set (Section \ref{sec:word-analogy}).
Table \ref{tbl:eval-google} summarizes the full results on the Google analogy test set when we used the whole vocabulary as the candidates.
The proposed model trained on the bigger analogy test set successfully transferred to the Google analogy test set with the best accuracy on $11$ subcategories out of $14$.
On the other hand, the performance of WV + MLP significantly deteriorated, especially $0\%$ accuracy in $4$ subcategories out of $5$ in the semantics task.
Table \ref{tbl:eval-google-exclude-abc} shows the results on the Google analogy test set when we excluded $a, b, c$ from the candidates.
In this case, the original WV performed the best.
This is probably because the word2vec model was highly tuned for the Google analogy test set for this evaluation method.

\paragraph{Detailed Results}
\begin{table}
\centering
\caption{Model comparison for each subcategory of Google analogy test set when we did not excluded $a, b, c$ from the candidates.}
\label{tbl:eval-google}
\begin{tabular}{lrlll}
\toprule
{} &   num &                     WV &                  WV + MLP &                     WV + AGN \\
\midrule
Overall   &  19544 &  4033 (20.64\%) &   1346 (6.89\%) &  \textbf{5676 (29.04\%)} \\
Semantic  &   8869 &  1995 (22.49\%) &    161 (1.82\%) &  \textbf{2260 (25.48\%)} \\
Syntactic &  10675 &  2038 (19.09\%) &  1185 (11.10\%) &  \textbf{3416 (32.00\%)} \\
\hline
capital-c... &   506 &            225 (44.47\%) &            0 (0.00\%) &   \textbf{227 (44.86\%)} \\
capital-w... &  4524 &           1168 (25.82\%) &            0 (0.00\%) &  \textbf{1223 (27.03\%)} \\
currency     &   866 &   \textbf{185 (21.36\%)} &            0 (0.00\%) &            119 (13.74\%) \\
city-in-s... &  2467 &            252 (10.21\%) &            0 (0.00\%) &   \textbf{350 (14.19\%)} \\
family       &   506 &            165 (32.61\%) &         161 (31.82\%) &   \textbf{341 (67.39\%)} \\
\hline
gram1-adj... &   992 &              15 (1.51\%) &  \textbf{86 (8.67\%)} &              84 (8.47\%) \\
gram2-opp... &   812 &              14 (1.72\%) &         200 (24.63\%) &   \textbf{235 (28.94\%)} \\
gram3-com... &  1332 &            329 (24.70\%) &         242 (18.17\%) &   \textbf{713 (53.53\%)} \\
gram4-sup... &  1122 &            124 (11.05\%) &         244 (21.75\%) &   \textbf{406 (36.19\%)} \\
gram5-pre... &  1056 &              73 (6.91\%) &           71 (6.72\%) &   \textbf{160 (15.15\%)} \\
gram6-nat... &  1599 &  \textbf{1180 (73.80\%)} &            0 (0.00\%) &            996 (62.29\%) \\
gram7-pas... &  1560 &             134 (8.59\%) &          127 (8.14\%) &   \textbf{353 (22.63\%)} \\
gram8-plu... &  1332 &              63 (4.73\%) &           82 (6.16\%) &   \textbf{176 (13.21\%)} \\
gram9-plu... &   870 &            106 (12.18\%) &         133 (15.29\%) &   \textbf{293 (33.68\%)} \\
\bottomrule
\end{tabular}
\end{table}

\begin{table}
\centering
\caption{Model comparison for each subcategory of Google analogy test set when we excluded $a, b, c$ from the candidates.}
\label{tbl:eval-google-exclude-abc}
\begin{tabular}{lrlll}
\toprule
{} &   num &                       WV &       WV + MLP &                 WV + AGN \\
\midrule
Overall   &  19544 &  \textbf{14382 (73.59\%)} &   1427 (7.30\%) &  11857 (60.67\%) \\
Semantic  &   8869 &   \textbf{6482 (73.09\%)} &    163 (1.84\%) &   4918 (55.45\%) \\
Syntactic &  10675 &   \textbf{7900 (74.00\%)} &  1264 (11.84\%) &   6939 (65.00\%) \\
\hline
capital-common-... &   506 &   \textbf{421 (83.20\%)} &     0 (0.00\%) &            378 (74.70\%) \\
capital-world      &  4524 &  \textbf{3580 (79.13\%)} &     0 (0.00\%) &           2689 (59.44\%) \\
currency           &   866 &   \textbf{304 (35.10\%)} &     0 (0.00\%) &            180 (20.79\%) \\
city-in-state      &  2467 &  \textbf{1749 (70.90\%)} &     0 (0.00\%) &           1223 (49.57\%) \\
family             &   506 &            428 (84.58\%) &  163 (32.21\%) &   \textbf{448 (88.54\%)} \\
\hline
gram1-adjective... &   992 &   \textbf{283 (28.53\%)} &    93 (9.38\%) &   \textbf{283 (28.53\%)} \\
gram2-opposite     &   812 &            347 (42.73\%) &  201 (24.75\%) &   \textbf{419 (51.60\%)} \\
gram3-comparati... &  1332 &  \textbf{1210 (90.84\%)} &  248 (18.62\%) &           1044 (78.38\%) \\
gram4-superlati... &  1122 &   \textbf{980 (87.34\%)} &  246 (21.93\%) &            777 (69.25\%) \\
gram5-present-p... &  1056 &   \textbf{825 (78.12\%)} &    78 (7.39\%) &            729 (69.03\%) \\
gram6-nationali... &  1599 &  \textbf{1438 (89.93\%)} &     0 (0.00\%) &           1158 (72.42\%) \\
gram7-past-tens... &  1560 &           1029 (65.96\%) &  157 (10.06\%) &  \textbf{1061 (68.01\%)} \\
gram8-plural       &  1332 &  \textbf{1197 (89.86\%)} &   103 (7.73\%) &            826 (62.01\%) \\
gram9-plural-ve... &   870 &            591 (67.93\%) &  138 (15.86\%) &   \textbf{642 (73.79\%)} \\
\bottomrule
\end{tabular}
\end{table}
\begin{table}
\centering
\caption{Model comparison for each subcategory of bigger analogy test set when we did not exclude $a, b, c$ from the candidates.}
\label{tbl:eval-bats-detail}
\begin{tabular}{lrlll}
\toprule
{} &  num &    WV &     WV + MLP &     WV + AGN \\
\midrule
I01 &   90 &             2 (2.22\%) &             5 (5.56\%) &    \textbf{7 (7.78\%)} \\
I02 &   90 &    \textbf{0 (0.00\%)} &    \textbf{0 (0.00\%)} &    \textbf{0 (0.00\%)} \\
I03 &   90 &           13 (14.44\%) &           22 (24.44\%) &  \textbf{41 (45.56\%)} \\
I04 &   90 &           10 (11.11\%) &           18 (20.00\%) &  \textbf{39 (43.33\%)} \\
I05 &   90 &           26 (28.89\%) &           58 (64.44\%) &  \textbf{60 (66.67\%)} \\
I06 &   90 &           14 (15.56\%) &           13 (14.44\%) &  \textbf{57 (63.33\%)} \\
I07 &   90 &             3 (3.33\%) &  \textbf{54 (60.00\%)} &           42 (46.67\%) \\
I08 &   90 &           11 (12.22\%) &           40 (44.44\%) &  \textbf{54 (60.00\%)} \\
I09 &   90 &             8 (8.89\%) &           49 (54.44\%) &  \textbf{62 (68.89\%)} \\
I10 &   90 &           13 (14.44\%) &           58 (64.44\%) &  \textbf{73 (81.11\%)} \\
\hline
D01 &   90 &    \textbf{0 (0.00\%)} &    \textbf{0 (0.00\%)} &    \textbf{0 (0.00\%)} \\
D02 &   90 &             0 (0.00\%) &    \textbf{1 (1.11\%)} &             0 (0.00\%) \\
D03 &   90 &             1 (1.11\%) &             2 (2.22\%) &    \textbf{5 (5.56\%)} \\
D04 &   90 &    \textbf{0 (0.00\%)} &    \textbf{0 (0.00\%)} &    \textbf{0 (0.00\%)} \\
D05 &   72 &             0 (0.00\%) &             2 (2.78\%) &    \textbf{5 (6.94\%)} \\
D06 &   90 &             0 (0.00\%) &    \textbf{2 (2.22\%)} &             0 (0.00\%) \\
D07 &   90 &    \textbf{0 (0.00\%)} &    \textbf{0 (0.00\%)} &    \textbf{0 (0.00\%)} \\
D08 &   90 &             0 (0.00\%) &    \textbf{4 (4.44\%)} &             0 (0.00\%) \\
D09 &   90 &             3 (3.33\%) &             0 (0.00\%) &    \textbf{7 (7.78\%)} \\
D10 &   90 &             0 (0.00\%) &    \textbf{4 (4.44\%)} &             3 (3.33\%) \\
\hline
E01 &   56 &             0 (0.00\%) &             0 (0.00\%) &    \textbf{2 (3.57\%)} \\
E02 &   56 &             4 (7.14\%) &  \textbf{14 (25.00\%)} &             4 (7.14\%) \\
E03 &   20 &   \textbf{6 (30.00\%)} &             0 (0.00\%) &            3 (15.00\%) \\
E04 &   20 &            2 (10.00\%) &            3 (15.00\%) &   \textbf{4 (20.00\%)} \\
E05 &   30 &            5 (16.67\%) &   \textbf{6 (20.00\%)} &            5 (16.67\%) \\
E06 &   90 &             4 (4.44\%) &           36 (40.00\%) &  \textbf{42 (46.67\%)} \\
E07 &   90 &             3 (3.33\%) &  \textbf{18 (20.00\%)} &             7 (7.78\%) \\
E08 &   90 &           12 (13.33\%) &           39 (43.33\%) &  \textbf{56 (62.22\%)} \\
E09 &   90 &           10 (11.11\%) &  \textbf{38 (42.22\%)} &           35 (38.89\%) \\
E10 &   90 &             6 (6.67\%) &             0 (0.00\%) &  \textbf{14 (15.56\%)} \\
\hline
L01 &   90 &             0 (0.00\%) &  \textbf{52 (57.78\%)} &           38 (42.22\%) \\
L02 &   90 &             1 (1.11\%) &  \textbf{15 (16.67\%)} &             8 (8.89\%) \\
L03 &   90 &    \textbf{0 (0.00\%)} &    \textbf{0 (0.00\%)} &    \textbf{0 (0.00\%)} \\
L04 &   90 &             0 (0.00\%) &    \textbf{4 (4.44\%)} &    \textbf{4 (4.44\%)} \\
L05 &   90 &             0 (0.00\%) &    \textbf{1 (1.11\%)} &             0 (0.00\%) \\
L06 &   90 &   \textbf{9 (10.00\%)} &             0 (0.00\%) &             6 (6.67\%) \\
L07 &   90 &  \textbf{11 (12.22\%)} &             5 (5.56\%) &             7 (7.78\%) \\
L08 &   90 &    \textbf{0 (0.00\%)} &    \textbf{0 (0.00\%)} &    \textbf{0 (0.00\%)} \\
L09 &   90 &             0 (0.00\%) &    \textbf{2 (2.22\%)} &             0 (0.00\%) \\
L10 &   90 &    \textbf{0 (0.00\%)} &    \textbf{0 (0.00\%)} &    \textbf{0 (0.00\%)} \\
\bottomrule
\end{tabular}
\end{table}

\begin{table}
\centering
\caption{Model comparison for each subcategory of bigger analogy test set when we excluded $a, b, c$ from the candidates.}
\label{tbl:eval-bats-detail-exclude-abc}
\begin{tabular}{lrlll}
\toprule
{} &  num &                     WV &               WV + MLP &               WV + AGN \\
\midrule
I01 &   90 &           53 (58.89\%) &             5 (5.56\%) &  \textbf{55 (61.11\%)} \\
I02 &   90 &  \textbf{42 (46.67\%)} &             0 (0.00\%) &           30 (33.33\%) \\
I03 &   90 &  \textbf{86 (95.56\%)} &           22 (24.44\%) &           73 (81.11\%) \\
I04 &   90 &  \textbf{68 (75.56\%)} &           18 (20.00\%) &           64 (71.11\%) \\
I05 &   90 &  \textbf{61 (67.78\%)} &           58 (64.44\%) &  \textbf{61 (67.78\%)} \\
I06 &   90 &           69 (76.67\%) &           13 (14.44\%) &  \textbf{71 (78.89\%)} \\
I07 &   90 &           52 (57.78\%) &           55 (61.11\%) &  \textbf{68 (75.56\%)} \\
I08 &   90 &           56 (62.22\%) &           42 (46.67\%) &  \textbf{69 (76.67\%)} \\
I09 &   90 &           58 (64.44\%) &           50 (55.56\%) &  \textbf{85 (94.44\%)} \\
I10 &   90 &           69 (76.67\%) &           61 (67.78\%) &  \textbf{80 (88.89\%)} \\
\hline
D01 &   90 &    \textbf{0 (0.00\%)} &    \textbf{0 (0.00\%)} &    \textbf{0 (0.00\%)} \\
D02 &   90 &    \textbf{3 (3.33\%)} &    \textbf{3 (3.33\%)} &             2 (2.22\%) \\
D03 &   90 &  \textbf{26 (28.89\%)} &             2 (2.22\%) &           15 (16.67\%) \\
D04 &   90 &  \textbf{11 (12.22\%)} &             0 (0.00\%) &             3 (3.33\%) \\
D05 &   72 &  \textbf{21 (29.17\%)} &             2 (2.78\%) &           17 (23.61\%) \\
D06 &   90 &           13 (14.44\%) &             2 (2.22\%) &  \textbf{19 (21.11\%)} \\
D07 &   90 &             1 (1.11\%) &             0 (0.00\%) &    \textbf{4 (4.44\%)} \\
D08 &   90 &             1 (1.11\%) &    \textbf{4 (4.44\%)} &             2 (2.22\%) \\
D09 &   90 &           21 (23.33\%) &             0 (0.00\%) &  \textbf{24 (26.67\%)} \\
D10 &   90 &             6 (6.67\%) &             4 (4.44\%) &  \textbf{12 (13.33\%)} \\
\hline
E01 &   56 &  \textbf{18 (32.14\%)} &             0 (0.00\%) &             5 (8.93\%) \\
E02 &   56 &             0 (0.00\%) &  \textbf{14 (25.00\%)} &             4 (7.14\%) \\
E03 &   20 &    \textbf{0 (0.00\%)} &    \textbf{0 (0.00\%)} &    \textbf{0 (0.00\%)} \\
E04 &   20 &             0 (0.00\%) &            3 (15.00\%) &   \textbf{4 (20.00\%)} \\
E05 &   30 &             0 (0.00\%) &   \textbf{6 (20.00\%)} &             2 (6.67\%) \\
E06 &   90 &             5 (5.56\%) &           33 (36.67\%) &  \textbf{47 (52.22\%)} \\
E07 &   90 &             3 (3.33\%) &           17 (18.89\%) &  \textbf{19 (21.11\%)} \\
E08 &   90 &             2 (2.22\%) &           40 (44.44\%) &  \textbf{53 (58.89\%)} \\
E09 &   90 &           12 (13.33\%) &  \textbf{38 (42.22\%)} &           33 (36.67\%) \\
E10 &   90 &  \textbf{43 (47.78\%)} &             0 (0.00\%) &           38 (42.22\%) \\
\hline
L01 &   90 &             7 (7.78\%) &           48 (53.33\%) &  \textbf{51 (56.67\%)} \\
L02 &   90 &             3 (3.33\%) &           15 (16.67\%) &  \textbf{17 (18.89\%)} \\
L03 &   90 &    \textbf{3 (3.33\%)} &             0 (0.00\%) &             2 (2.22\%) \\
L04 &   90 &             1 (1.11\%) &             3 (3.33\%) &    \textbf{5 (5.56\%)} \\
L05 &   90 &    \textbf{1 (1.11\%)} &    \textbf{1 (1.11\%)} &    \textbf{1 (1.11\%)} \\
L06 &   90 &             0 (0.00\%) &             0 (0.00\%) &    \textbf{1 (1.11\%)} \\
L07 &   90 &  \textbf{12 (13.33\%)} &             4 (4.44\%) &             2 (2.22\%) \\
L08 &   90 &  \textbf{27 (30.00\%)} &             0 (0.00\%) &           17 (18.89\%) \\
L09 &   90 &             2 (2.22\%) &    \textbf{5 (5.56\%)} &    \textbf{5 (5.56\%)} \\
L10 &   90 &    \textbf{8 (8.89\%)} &             1 (1.11\%) &             5 (5.56\%) \\
\bottomrule
\end{tabular}
\end{table}

Table \ref{tbl:eval-bats-detail} and \ref{tbl:eval-bats-detail-exclude-abc} summarizes the full results for each subcategory in the bigger analogy test set.
We can see the performance improvement of the proposed method in most subcategories.

\end{document}